\newtheorem{assumption}{Assumption}
\definecolor{lightblue}{cmyk}{0.95,0.0,0.2,0.2}
\newcommand{\ours}{\textsc{DiffOPT}\xspace}
\newtheorem{theorem}{Theorem}
\newcommand{\ie}{\textit{i}.\textit{e}.,}
\newcommand{\wrt}{\textit{w}.\textit{r}.\textit{t}.}
\newtheorem{proposition}{Proposition}
\newcommand{\std}[1]{\scriptsize

{$\pm$#1}}
\newcommand{\bfw}{\mathbf{w}}
\newcommand{\bfy}{\mathbf{y}}
\newcommand{\bfx}{\mathbf{x}}
\newcommand{\bfz}{\mathbf{z}}
\newcommand{\rmd}{\mathrm{d}}
\newcommand{\rset}{\mathbb{R}}
\newcommand{\supp}{\mathrm{supp}}
\newcommand{\rmc}{\mathrm{C}}
\newcommand{\abs}[1]{| #1 |}
\newcommand{\nset}{\mathbb{N}}
\newcommand{\calD}{\mathcal{D}}
\newcommand{\vareps}{\varepsilon}
\newcommand{\Id}{\mathrm{Id}}
\newcommand{\lightblue}[1]{\textcolor{lightblue}{#1}}
\newcommand{\vsra}{{\ensuremath{%
  \mathchoice{\includegraphics[height=1.1ex]{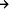}}
    {\includegraphics[height=1.1ex]{symbols/veryshortrightarrow.pdf}}
    {\includegraphics[height=.8ex]{symbols/veryshortrightarrow.pdf}}
    {\includegraphics[height=.5ex]{symbols/veryshortrightarrow.pdf}}
}}}
\newcommand{\vsla}{{\ensuremath{%
  \mathchoice{\includegraphics[height=1.1ex]{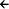}}
    {\includegraphics[height=1.1ex]{symbols/veryshortleftarrow.pdf}}
    {\includegraphics[height=.8ex]{symbols/veryshortleftarrow.pdf}}
    {\includegraphics[height=.5ex]{symbols/veryshortleftarrow.pdf}}
}}}
\begin{document}

% If your paper is accepted and the title of your paper is very long,
% the style will print as headings an error message. Use the following
% command to supply a shorter title of your paper so that it can be
% used as headings.
%
%\runningtitle{I use this title instead because the last one was very long}

% If your paper is accepted and the number of authors is large, the
% style will print as headings an error message. Use the following
% command to supply a shorter version of the authors names so that
% they can be used as headings (for example, use only the surnames)
%
\runningauthor{Lingkai Kong*, Yuanqi Du*, Wenhao Mu*}

\twocolumn[

\aistatstitle{Diffusion Models as Constrained Samplers for Optimization with Unknown Constraints}

\aistatsauthor{ Lingkai Kong*$^{1}$, \quad Yuanqi Du*$^{2}$, \quad Wenhao Mu*$^{1}$, \quad Kirill Neklyudov$^{3,4}$ } 

\aistatsauthor{Valentin De Bortoli$^{5}$, \quad Dongxia Wu$^{6}$, \quad Haorui Wang$^{1}$,\quad Aaron Ferber$^{2}$}

\aistatsauthor{Yi-An Ma$^{6}$, \quad Carla P. Gomes$^{2}$, \quad Chao Zhang$^{1}$ }

\vspace{0.5em}

\aistatsaddress{$^{1}$Georgia Tech \quad $^{2}$Cornell University \quad $^{3}$Université de Montréal \quad $^{4}$Mila \\$^{5}$ENS, CNRS, PSL
University, $^{6}$ UCSD} ]

\renewcommand{\thefootnote}{\fnsymbol{footnote}}
\footnotetext[1]{Equal contribution, authors agreed ordering can be changed for their respective interests.}
\renewcommand{\thefootnote}{\arabic{footnote}}

\begin{abstract}
Optimization with unknown constraints is a
challenging, yet unsolved problem. Overlooking these constraints can lead to spurious solutions that are unrealistic in practice. To deal with such unknown constraints, we propose to perform optimization within the data manifold using diffusion models. To constrain the optimization
process to the data manifold, we reformulate the original optimization problem
as a sampling problem from the product of the Boltzmann distribution defined
by the objective function and the data distribution learned by the diffusion model.
Depending on the differentiability of the objective function, we propose two different sampling methods. For differentiable objectives, we propose a two-stage
framework that begins with a guided diffusion process for warm-up, followed by a
Langevin dynamics stage for further correction. For non-differentiable objectives,
we propose an iterative importance sampling strategy using the diffusion model as
the proposal distribution. Comprehensive experiments on a synthetic dataset, six
real-world black-box optimization datasets, and a multi-objective molecule optimization dataset show that our method achieves better or comparable performance
with previous state-of-the-art baselines.
\end{abstract}

\vspace{1em}
\section{Introduction}
Optimization problems are ubiquitous in real-world applications when approaching search problems \citep{bazaraa2011linear}, partial differential equations \citep{isakov2006inverse}, molecular design \citep{sanchez2018inverse}. While significant advancements have been made in resolving a broad spectrum of abstract optimization problems with analytically known objective functions and constraints \citep{boyd2004convex,rao2019engineering,petropoulos2023operationsresearchsurvey},  optimization in real-world scenarios remains challenging since the exact nature of the objective is often unknown, and access to constraints is limited \citep{conn2009introduction}. For example, it is challenging to incorporate the closed-form constraints on a molecule to be synthesizable or design an objective function for target chemical properties.

Previous studies have identified problems with unknown objective functions as black-box optimization problems \citep{conn2009introduction,alarie2021two}. In such scenarios, the only way to obtain the objective value is through running a simulation \citep{larson2019derivative} or conducting a real-world experiment \citep{shields2021bayesian}, which might be expensive and non-differentiable. 
A prevalent approach to this challenge involves learning a surrogate model with available data to approximate the objective function which can be implemented in either an online \citep{snoek2012practical,shahriari2015taking,srinivas2010gaussian} or offline manner \citep{COMs, designbench}. 

However, there is a significant lack of research focused on scenarios where analytic constraints are absent. The only works that deal with unknown constraints are from the derivative-free optimization community \citep{audet2004pattern,audet2006mesh, nguyen2023stochastic}. However, these methods can only be applied to simple low-dimensional problems and cannot be applied to more complex problems such as molecule and protein optimization. In practice, overlooking these feasibility constraints during optimization can result in spurious solutions. For instance, the optimization process might yield a molecule with the desired chemical property but cannot be physically synthesized \citep{gao2020synthesizability,du2022molgensurvey}, which would require restarting the optimization from different initializations \citep{krenn2020self,jain2023gflownets}.

To restrict the search space to the set of feasible solutions, we propose to perform optimization within the support of the data distribution or the data manifold.
Indeed, in practice, one usually has an extensive set of samples satisfying the necessary constraints even when the constraints are not given explicitly.
For instance, the set of synthesizable molecules can be described by the distribution of natural products \citep{baran2018natural, vorvsilak2020syba}. To learn the data distribution, we focus on using diffusion models, which recently demonstrated the state-of-the-art performance in image modeling \citep{ddpm, song2020score}, video generation \citep{ho2022imagen}, and 3D synthesis \citep{poole2022dreamfusion}.
Moreover, \citep{pidstrigach2022score,debortoli2022convergence} theoretically demonstrated that diffusion models can learn the data distributed on a lower dimensional manifold embedded in the representation space, which is often the case of the feasibility constraints.

To constrain the optimization process to the data manifold, we reformulate the original optimization problem as a sampling problem from the product of two densities:
i) a Boltzmann density with energy defined by the objective function and ii) the density of the data distribution. The former concentrates around the global minimizers in the limit of zero temperature \citep{hwang1980laplace,gelfand1991recursive}, while the latter removes the non-feasible solutions by yielding the zero target density outside the data manifold.
Depending on whether the objective function is differentiable or non-differentiable, we propose two different sampling methods. When the objective function is differentiable, we propose a two-stage sampling strategy: (i) a guided diffusion process acts as a warm-up stage to provide initialization of data samples on the manifold, and (ii) we ensure convergence to the target distribution via Markov Chain Monte Carlo (MCMC). When the objective is non-differentiable, we propose an iterative importance sampling strategy using diffusion models to gradually improve the proposal distribution. %At each iteration, we apply the forward and backward processes to the particles resampled from the previous iteration, using this as an improved proposal distribution while maintaining diversity.
%We also theoretically prove that the first stage yields a distribution that concentrates on the feasible minimizers in the zero temperature limit.  

The main contributions of this work are: (1) We reformulate the problem of optimization under unknown constraints as a sampling problem from the product of the data distribution and the Boltzmann distribution defined by the objective function.
(2) We propose two different sampling methods with diffusion models, depending on the differentiability of the objective function. (3) Empirically, we validate the effectiveness of our proposed framework on a synthetic toy example, six real-world offline black-box optimization tasks as well as a multi-objective molecule optimization task. We find that our method, named \ours, can outperform state-of-the-art methods.

\begin{figure*}[t]
   \centering
    \subfigure[]{\includegraphics[width=0.4\textwidth]{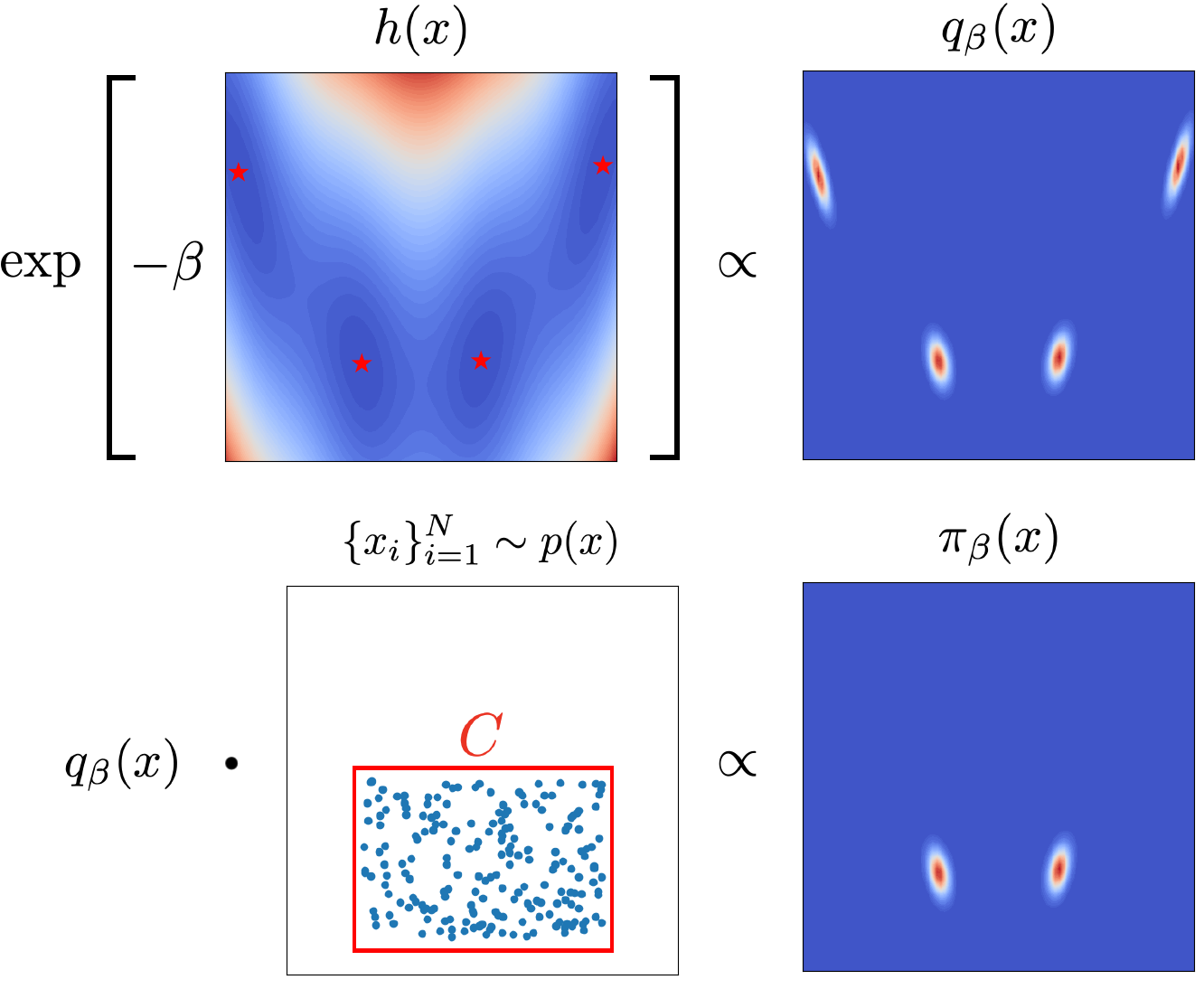}} \subfigure[]{\includegraphics[width=0.4\textwidth]{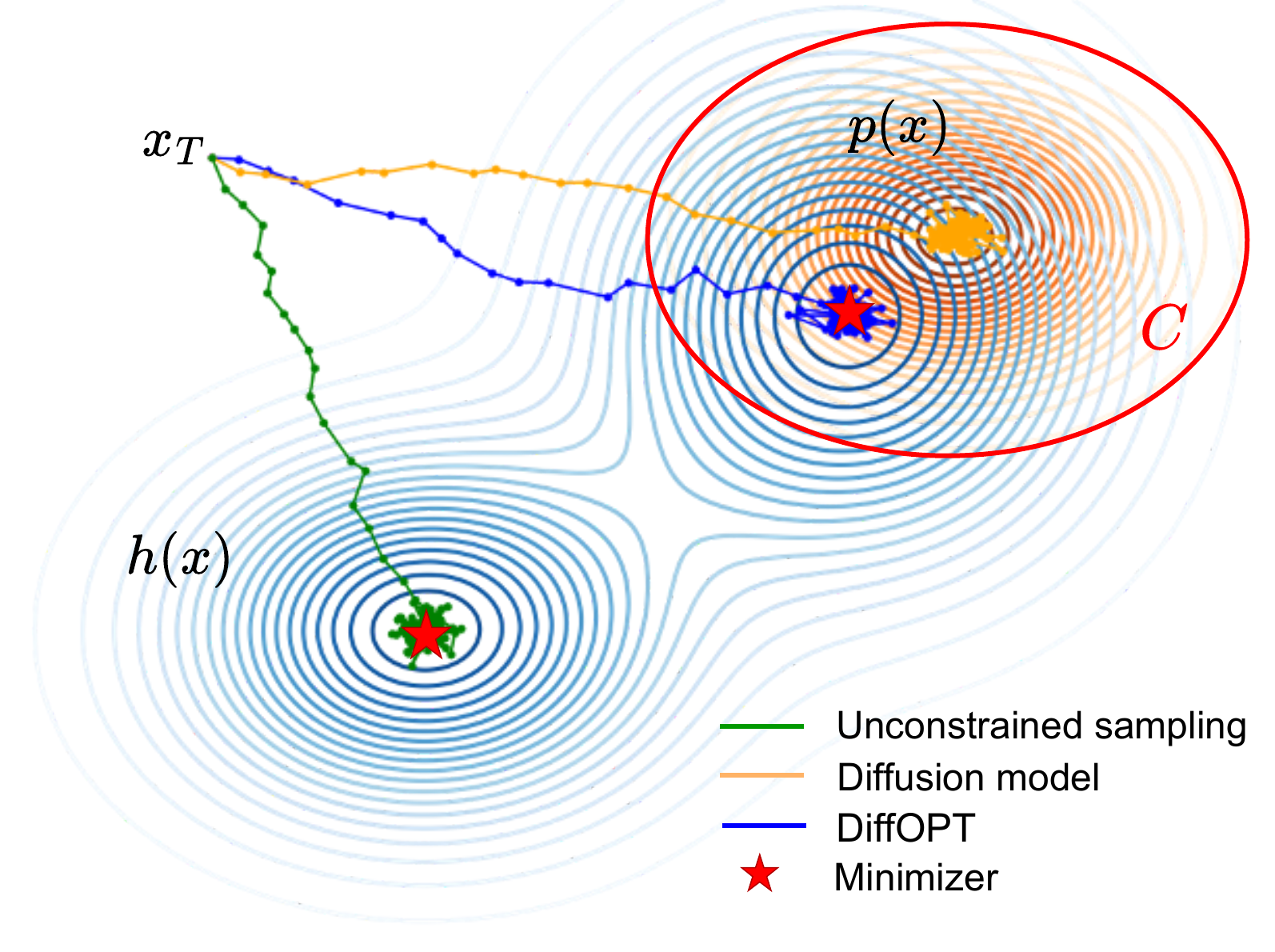}} 
    %\vspace{-0.5em}
    \caption{Constrained optimization as a sampling from the product of densities. That is, we minimize the objective function $h(x)$ (red stars denote the minimizers) within the feasible set $C$, which is given by samples $\{x_i\}_{i=1}^N \sim p(x)$. This problem is equivalent to sampling from the density $\pi_\beta(x) \propto p(x) \exp[-\beta h(x)]$, which concentrates around minimizers of $h(x)$ within the feasible set $C$. The distribution we sample from is shown on the left and the trajectory we take to sample is shown on the right.}
    \label{fig:main}
\end{figure*}

%\vspace{-1em}
\section{Background}
%\vspace{-1em}
\textbf{Problem definition.} Consider an optimization problem with \emph{objective function} $h: \mathbb{R}^d \rightarrow \mathbb{R}$. Additionally, we consider a \emph{feasible set} $C$, which is a subset of $\mathbb{R}^d$. 

Our goal is to find the set of minimizers $\{ x^{\star}_i\}_{i=1}^{M}$ of the objective  $h$ within the feasible set $C$. 
This can be expressed as the following constrained optimization problem
$$
\textstyle \{x^{\star}_i\}_{i=1}^{M} = \arg \min _{x \in C} h(x).
$$
However, in our specific scenario, the explicit formulation of the feasible set $C$ is \emph{unavailable}. 
Instead, we can access a set of points $\mathcal{D}=\{ x_i\}_{i=1}^N$ sampled independently from the feasible set $C$.

\textbf{Optimization via sampling.} If we do not consider the constraints, under mild assumption, the optimization process is equivalent to sample from a Boltzmann distribution $q_{\beta}(x) \propto \exp[-\beta h(x)]$, in the limit where $\beta \to \infty$, where $\beta$ is an \emph{inverse temperature} parameter. This is the result of the following proposition which can be found in  \citep[Theorem 2.1]{hwang1980laplace}.
\begin{proposition}
\vspace{-0.1mm}
\label{prop:laplace_minimizers}
Assume that $h \in \rmc^3(\rset^d, \rset)$. Assume that $\{x_i^\star\}_{i=1}^M$ is the set of minimizers of $h$. Let $p$ be a density on $\rset^d$ such that there exists $i_0 \in \{1, \dots, M\}$ with $p(x_{i_0}^\star) > 0$. Then $Q_\beta$ the distribution with density w.r.t the Lebesgue measure $\propto q_\beta(x)p(x)$ weakly converges to $Q_\infty$ as $\beta \to \infty$ and we have that 
\begin{equation}
    \textstyle Q_\infty = \sum_{i=1}^M a_i \updelta_{x_i^\star} / \sum_{i=1}^M a_i , 
\end{equation}
with $a_i = p(x_i^\star) \det(\nabla^2 h(x_i^\star))^{-1/2}$.
\end{proposition}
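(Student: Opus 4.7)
The plan is to apply Laplace's method. Write $h^\star := h(x_i^\star)$ (the common value at all global minimizers) and let $Z_\beta = \int e^{-\beta h(x)} p(x) \, dx$ be the normalizer of $Q_\beta$. To establish weak convergence it suffices to show that $\int f \, dQ_\beta \to \sum_i a_i f(x_i^\star)/\sum_i a_i$ for every bounded continuous $f$, which I would do by extracting the leading-order asymptotics of both the numerator $\int f \, e^{-\beta h} p$ and $Z_\beta$ and comparing them.

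First I would localize. Pick pairwise disjoint open neighborhoods $U_i \ni x_i^\star$ small enough that $\nabla^2 h$ is uniformly positive definite on each $U_i$ and Taylor's theorem supplies a quantitative cubic remainder. Splitting the integral over $\bigsqcup_i U_i$ versus its complement, I would argue that the complement contributes a vanishing fraction of $Z_\beta$: since the $x_i^\star$ are the only global minima, $h \geq h^\star + \delta$ off $\bigcup_i U_i$ for some $\delta > 0$, so under a mild tail hypothesis (compact support of $p$, or $e^{-\beta_0 h} p \in L^1$ for some $\beta_0$) the outer piece is exponentially suppressed relative to the $U_i$ contributions below.

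Inside each $U_i$, use $\nabla h(x_i^\star) = 0$ and the $\rmc^3$ assumption to expand $h(x) = h^\star + \tfrac{1}{2}(x - x_i^\star)^\top H_i (x - x_i^\star) + R_i(x)$, where $H_i := \nabla^2 h(x_i^\star)$ and $|R_i(x)| \leq C\|x - x_i^\star\|^3$. Substituting $y = \sqrt{\beta}(x - x_i^\star)$ recasts the $U_i$ integral as
\begin{equation*}
\beta^{-d/2} e^{-\beta h^\star} \int f\!\left(x_i^\star + \tfrac{y}{\sqrt\beta}\right) p\!\left(x_i^\star + \tfrac{y}{\sqrt\beta}\right) e^{-\tfrac12 y^\top H_i y - \beta R_i(x_i^\star + y/\sqrt\beta)} dy,
\end{equation*}
where the rescaled cubic term satisfies $\beta |R_i(x_i^\star + y/\sqrt\beta)| \leq C\|y\|^3/\sqrt\beta$, which vanishes pointwise and is dominated by the Gaussian $e^{-\tfrac14 y^\top H_i y}$ on a bulk region such as $\{\|y\| \leq \beta^{1/6}\}$. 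Dominated convergence then gives the limit $f(x_i^\star) p(x_i^\star)(2\pi)^{d/2}\det(H_i)^{-1/2}$. The common prefactor $\beta^{-d/2} e^{-\beta h^\star}(2\pi)^{d/2}$ appears in every contribution and cancels between the numerator and $Z_\beta$, leaving precisely the weights $a_i = p(x_i^\star)\det(H_i)^{-1/2}$.

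The main obstacles are the tail estimate and the nondegeneracy of the Hessians. The statement does not explicitly forbid $h$ from admitting distant near-minimizers that carry nontrivial $p$-mass, so one must read in a standard integrability condition (as in the Hwang reference) to justify that the $U_i$ contributions dominate. Positive-definiteness of each $H_i$ is implicit in the very formula for $a_i$; without it the Gaussian integral would be ill-defined and one would have to replace $\det(H_i)^{-1/2}$ by the appropriate higher-order expansion. Given these hypotheses, the rest of the argument reduces to bookkeeping in the change-of-variables step.
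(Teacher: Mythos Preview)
The paper does not actually prove this proposition; it simply cites it as Theorem~2.1 of \citet{hwang1980laplace}. Your Laplace-method sketch is the standard route to that result and is essentially correct, including your explicit identification of the two implicit hypotheses (positive-definiteness of each $\nabla^2 h(x_i^\star)$, and a tail/integrability condition such as $e^{-\beta_0 h}p \in L^1$) that the statement as written suppresses but that Hwang also assumes.
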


Based on this proposition,  \citep{gelfand1991recursive} proposed a \emph{tempered} method for global optimization.
However, the proposed temperature schedule scales logarithmically with the number of steps; hence, the total number of iterations scales exponentially, hindering this method's straightforward application.
In practice, we can sample from the density with the target high $\beta$, see  \citep{raginsky2017non,ma2019sampling,debortoli2021quantitative}. 

\textbf{Product of Experts.}
To constrain our optimization procedure to the feasible set $C$, we propose to model the target density as a product of experts  \citep{hinton2002training}, a modeling approach representing the ``unanimous vote'' of independent models. Given the density models of $m \in \nset$ "experts" $\{q_i(x)\}_{i=1}^m$, the target density of their product is defined as
\begin{align}
\textstyle
    \pi(x) \propto \prod_{i=1}^m q_i(x)\,.
\end{align}
Hence, if one of the experts yields zero density at $x$, the total density of the product at $x$ is zero.

\textbf{Diffusion Models.} 
Given a dataset $\calD = \{x_i\}_{i=1}^N \sim p_{\text{data}}(x)^{\otimes N}$ with $p_{\text{data}}$ concentrated on the feasible set $C$, we will learn a generative model $p$ such that $p \approx p_{\text{data}}$ using a diffusion model  \citep{sohl2015deep,song2019generative,ddpm}. The density $p$ will then be used in our product of experts model to enforce the feasibility constraints, see \Cref{sec:constrained_opt_product}.
In diffusion models, we first simulate a \emph{forward} noising process starting from data distribution $p_0(x)=p_{\text{data}}(x)$ which converges to the standard Gaussian distribution $p_{T}(x)\approx \mathcal{N}(\textnormal{0},\Id)$ as $T \rightarrow \infty$. 
The forward process is defined by the following SDE
\begin{equation}
\textnormal{d}\bfx^{\vsra}_t = f(\bfx^{\vsra}_t, t)\textnormal{d}t + g(t)\textnormal{d}\mathbf{w}_t, \; \bfx^{\vsra}_0\sim p_{\text{data}}(x), \; 0 \leq t \leq T\,,
\end{equation}
where $f:\mathbb{R}^d \rightarrow \mathbb{R}^d$ is a vector-valued drift function, $g(t):  \mathbb{R}\rightarrow \mathbb{R}$ is a scalar-valued diffusion coefficient, and $(\bfw_t)_{t \geq 0}$ is a $d$-dimensional Brownian motion. 
Then, under mild assumptions, the reverse process that generates data from normal noise follows the backward SDE \citep{haussmann1986time, anderson1982reverse}
\begin{align}
\textnormal{d}\bfx^{\vsla}_t =  \left[- f(\bfx^{\vsla}_t, \tau) + g^2(\tau)\nabla_{x} \log p_{\tau}(\bfx^{\vsla}_t)\right]\textnormal{d}t + g(\tau)\textnormal{d}\mathbf{w}_t, 
\vspace{3mm}
\end{align}
where $\tau = T-t$ and $\nabla_{x} \log p_{\tau}(x)$ is the score function which is modeled by a time-dependent neural network via the score matching objective
\begin{align}
\label{eq:sm_objective}
\mathbb{E}_{t} \left[\lambda(t)\mathbb{E}_{\bfx_0^\vsra}\mathbb{E}_{\bfx_t^\vsra|\bfx_0^\vsra}\left[ \|s_\theta(\bfx_t^\vsra, t) - \nabla_{x} \log p_{t|0}(\bfx_t^\vsra|\bfx_0^\vsra)\|^2_2 \right]\right],
\vspace{3mm}
\end{align}
where $p_{t|0}(\bfx_t^\vsra|\bfx_0^\vsra)$ is the conditional density of the forward SDE starting at $\bfx_0^\vsra$, and $\lambda(t) > 0$ is a weighting function. Under assumptions on $f$ and $g$, there exists $\alpha, \sigma$ such that for any $t \in [0, T]$, $\bfx_t^\vsra = \alpha_t \bfx_0^\vsra + \sigma_t \vareps$, where $\vareps \sim \mathcal{N}(0, \Id)$, and therefore one does not need to integrate the forward SDE to sample $(\bfx_0^\vsra, \bfx_t^\vsra)$. 
Recent works have shown that diffusion models can detect the underlying data manifold supporting the data distribution \citep{pidstrigach2022score,debortoli2022convergence}. This justifies the use of the output distribution of a diffusion model as a way to identify the feasible set.

%\vspace{-1.2em}
\section{Proposed Method}
%\vspace{-1.2em}
In this section, we present our method, \ours. First, we formulate the optimization process as a sampling problem from the product of the data distribution  concentrated on the manifold and the Boltzmann distribution defined by the objective function. Then we propose two sampling methods with diffusion models for different types of objective functions.

\subsection{Constrained Optimization as Sampling from Product of Experts}
\label{sec:constrained_opt_product}
We recall that $q_{\beta}(x) \propto \exp[-\beta h(x)]$, where $h$ is the objective function. 
While it is possible to sample directly from $q_{\beta}(x)$, the generated samples may fall outside of the feasible set $C$, defined by the dataset $\calD$. To address this, we opt to sample from the product of the data distribution and the Bolzmann distribution induced by the objective function. Explicitly, our goal is to sample from a distribution $\pi_\beta(x)$ defined as
\begin{align}
\textstyle \pi_\beta(x) \propto p(x)  q_{\beta}(x)\label{eq:target_density}.
\end{align}
Using \Cref{prop:laplace_minimizers}, we have that $\pi_\beta$ concentrates on the \emph{feasible} minimizers of $h$ as $\beta \to + \infty$.

It should be noted that \cref{eq:target_density} satisfies the following properties: (a) it assigns high likelihoods to points $x$ that simultaneously exhibit sufficiently high likelihoods under both the base distributions $p(x)$ and $q_\beta(x)$; (b) it assigns low likelihoods to points $x$ that display close-to-zero likelihood under either one or both of these base distributions. This ensures that the generated samples not only remain within the confines of the data manifold but also achieve low objective values.

% Extending our framework to the multi-objective optimization setting is straightforward. Suppose we aim to maximize a set of objectives, denoted as $\{h^1, h^2, \ldots\}$. In this context, the optimization process can be represented by the distribution $\pi(x)$, which is proportional to the product of the data distribution and the individual Boltzmann distributions for each objective, expressed as $\pi(x) \propto p(x) \prod_i  q^i_{\beta}(x)$. 

In practice, the objective function $h$ can be either differentiable or non-differentiable. In the following sections, we will propose two sampling methods using diffusion models, for each type respectively.

\subsection{Differentiable Objective: Two-stage Sampling with Optimization-guided Diffusion}
\label{sec:two-stage}

Under mild assumptions, the following SDE converges to $\pi_\beta$ \wrt~ the total variation distance \citep{roberts1996exponential}
\begin{align}
    \textnormal{d}\mathbf{x}_t = \nabla_{\mathbf{x}}\log\pi_\beta(\mathbf{x}_t)\textnormal{d}t + \sqrt{2}\textnormal{d}\mathbf{w}_t.
\label{eq:langevin}
\end{align}
where the gradient of the unnormalized log-density can be conveniently expressed as the sum of the scores, \ie~ $ \nabla_x\log \pi_\beta(x) = \nabla_x\log p(x) + \nabla_x\log q_\beta(x)$. Furthermore, one can introduce a Metropolis-Hastings (MH) correction step to guarantee convergence to the target distribution when using discretized version of \cref{eq:langevin} \citep{durmus2022geometric}, which is known as the Metropolis-Adjusted-Langevin-Algorithm (MALA) \citep{grenander1994representations}.
We provide more details of both algorithms in 
% the supplementary material. 
Appendix \ref{sec:mala}.

Theoretically, sampling from \cref{eq:target_density} can be done via MALA. However, in practice, the efficiency significantly depends on the choice of the initial distribution and the step size schedule. The latter is heavily linked with the Lipschitz constant of $\log q_\beta$ which controls the stability of the algorithm. Large values of $\beta$, necessary to get accurate minimizers, also yield large Lipschitz constants which in turn impose small stepsizes. Moreover, the gradient of the log-density can be undefined outside the feasibility set $C$.

To circumvent these practical issues, we propose sampling in two stages: a warm-up stage and a sampling stage.
The former aims to provide a good initialization for the sampling stage. The sampling stage follows the Langevin dynamics for further correction. The pseudocode for both stages is provided in \cref{sec:two-alg}.

\textbf{Stage I: Warm-up with guided diffusion.}
In imaging inverse problems, it is customary to consider \emph{guided} diffusion models to enforce some external condition, see \citep{chung2022diffusion,chung2022improving,song2022pseudoinverse,wu2023practical}. In our setting, we adopt a similar strategy where the guidance term is given by $\beta h$, \ie~we consider
\begin{align}
\begin{split}
\label{eq:warm_up_process_main}
\textnormal{d}\bfx^{\vsla}_t  = & [ -f(\bfx^{\vsla}_t, \tau)  + g^2(\tau)s_{\theta}(\bfx^{\vsla}_t,\tau) -\beta \nabla h({\bfx}_t^\vsla)
] \textnormal{d}t \\ &+ g(\tau)\textnormal{d}\mathbf{w}_t,  \quad \tau=T-t.
\end{split}
\end{align}
\begin{theorem}
\label{thm:concentration_warmup}
    Under assumptions on $p$, $h$, we denote $p_{T-t}^\beta$ the distribution of \cref{eq:warm_up_process_main} at time $t$ and there exists $C > 0$ such that for any $x \in \rset^d$
    \begin{equation}
        (1/C) \tilde{p}_0^\beta(x) \leq p_0^\beta(x) \leq C \tilde{p}_0^\beta(x) ,
    \end{equation}
    where $p_0^\beta$ is the output of the warm-up guided diffusion process and 
        $\tilde{p}_0^\beta(x) = p_0(x) \exp[\log(\beta_0) W_0^\beta(x)] ,$  
    with $W_0^\beta(x) = \Delta h(x) + \langle \nabla \log p_0^\beta(x), \nabla h(x) \rangle$ and $\beta_0$ is the inverse temperature at the end of the process
\end{theorem}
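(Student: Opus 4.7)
The plan is to represent the output density $p_0^\beta$ of the guided reverse SDE as a Feynman--Kac weighting of the unguided marginal $p_0$, and then evaluate this weight asymptotically in $\beta_0$ using the schedule $\beta(\tau)=1/\tau$. Working under the idealization $s_\theta = \nabla \log p_\tau$, set $r_t(x) = p_{T-t}^\beta(x)/p_{T-t}(x)$, where $p_{T-t}^\beta$ and $p_{T-t}$ are the reverse-time marginals of the guided and unguided processes. Writing the Fokker--Planck equations for both densities, forming the ratio, and performing standard cancellations (using that the unguided reverse SDE with drift $-f + g^2\nabla\log p_{T-t}$ reproduces $p_{T-t}$) yields the linear parabolic equation
\[
\partial_t r_t \;=\; \mathcal{L}_t r_t \;+\; \beta(T-t)\,V_t\,r_t, \qquad r_0 \equiv 1,
\]
where $\mathcal{L}_t = (f+\beta \nabla h)\cdot\nabla + \tfrac{1}{2}g^2\Delta$ generates an auxiliary diffusion and $V_t(x) = \Delta h(x) + \langle \nabla h(x), \nabla \log p_{T-t}(x)\rangle$. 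Notice that $V_t$ already has the structural form of $W_0^\beta$.

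Applying the Feynman--Kac formula to this PDE gives
\[
r_T(x) \;=\; \mathbb{E}_x\!\left[\exp\!\left(\int_0^T \beta(T-u)\, V_u(X_u)\,du\right)\right],
\]
where $(X_u)_{u \in [0,T]}$ is the auxiliary diffusion started at $X_0 = x$. Substituting $\tau = T-u$ and inserting the schedule reduces the exponent to $\int_{1/\beta_0}^{T} \tau^{-1} V_\tau(Y_\tau)\,d\tau$ with $Y_\tau = X_{T-\tau}$. Two features are central: the weight $1/\tau$ has total mass $\log \beta_0 + O(1)$ on $[1/\beta_0, T]$, and it concentrates the integral near $\tau = 1/\beta_0$, where $Y_\tau$ is close to the terminal point $x$ of the reverse process.

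Under smoothness hypotheses on $h$ and on $\tau \mapsto \nabla \log p_\tau$, together with path bounds for the auxiliary diffusion, one replaces $V_\tau(Y_\tau)$ by $V_0(x) = \Delta h(x) + \langle \nabla h(x), \nabla \log p_0(x)\rangle$ up to an error that is uniformly bounded in $x$ and $\beta_0$. Exponentiating and using $p_0^\beta = r_T\, p_0$ then yields the two-sided bound, with $W_0^\beta$ identified as $V_0$. Reconciling the natural $\nabla \log p_0$ produced by the PDE derivation with the $\nabla \log p_0^\beta$ appearing in the theorem statement is done by a bootstrap: the same two-sided density bound allows the constant $C$ to absorb the resulting score discrepancy on the support of $p_0$.

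The main technical obstacle is making the approximation $V_\tau(Y_\tau) \approx V_0(x)$ uniform in $x$. Two regimes have to be controlled. For $\tau$ near $1/\beta_0$, the score $\nabla \log p_\tau$ can blow up near the data manifold; handling this requires either an explicit score-regularity hypothesis or a short-time expansion leveraging the Gaussian smoothing built into the forward SDE. For $\tau$ bounded away from zero, the $1/\tau$ weight is benign, but one still needs to show that the auxiliary diffusion, driven by the amplified drift $f + \beta \nabla h$, does not transport mass far from $x$ on the relevant $\log \beta_0$ time scale. Once these path and score estimates are in place, the Feynman--Kac machinery closes the argument.
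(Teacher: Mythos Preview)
Your Feynman--Kac strategy is the same as the paper's, but your decomposition differs in a way that creates a real gap.

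The paper does \emph{not} form the ratio $r_t = p_{T-t}^\beta/p_{T-t}$. Instead it writes \emph{both} $p_t$ and $p_t^\beta$ as backward Kolmogorov equations with the \emph{same} transport term $\mu_t = f_t - g_t^2\nabla\log p_t$ and applies Feynman--Kac to each, driven by the \emph{same} $\beta$-independent auxiliary diffusion $\rmd\bfz_t = \mu_t\,\rmd t + g_t\,\rmd\bfw_t$. The two representations differ only by the multiplicative factor $\exp[\int_0^1 \beta_t W_t^\beta(\bfz_t)\,\rmd t]$ with $W_t^\beta = \Delta h + \langle\nabla h,\nabla\log p_t^\beta\rangle$. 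This buys two things you do not get. First, the auxiliary process has no $\beta\nabla h$ in its drift, so the ``amplified drift'' difficulty you flag simply does not arise. Second, $W_t^\beta$ already carries $\nabla\log p_t^\beta$, so evaluating at $t=0$ reproduces the $W_0^\beta$ in the statement verbatim; no bootstrap is needed.

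Your bootstrap is where the argument breaks. From your own bound you would have $\log r_T(x)\asymp \log(\beta_0)\,V_0(x)$, hence $\nabla\log p_0^\beta - \nabla\log p_0 = \nabla\log r_T \asymp \log(\beta_0)\,\nabla V_0$. The discrepancy $\log(\beta_0)\langle\nabla h,\nabla\log p_0^\beta-\nabla\log p_0\rangle$ is then of order $\log(\beta_0)^2$, which cannot be absorbed into a constant $C$ independent of $\beta_0$. So the reconciliation step, as written, does not close.

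Finally, note that the paper does not actually \emph{prove} the approximation you identify as the ``main technical obstacle'': it is imposed directly as an almost-sure hypothesis (Assumption~\ref{assumption:score_control}), namely $\bigl|\int_{1/\beta_0}^1 t^{-1}(W_t^\beta(\bfz_t)-W_0^\beta(\bfz_0))\,\rmd t\bigr|\le C$. With that assumption and the decomposition above, the two-sided bound is immediate.
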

%\vspace{-3em}
The proof is postponed to 
% supplementary materials.
Appendix \ref{sec:appendix_end_distribution}. 
% In \Cref{thm:concentration_warmup}, we show that the output distribution of the warm-up process is upper and lower bounded by a product of experts comprised of 
% \begin{enumerate*}[label=(\roman*)]
%     \item $p_0$ which ensures the \emph{feasibility} conditions
%     \item $\exp[\log(\beta_0) W_0^\beta]$ related to the \emph{optimization} of the objective. 
% \end{enumerate*}
As an immediate consequence of \Cref{thm:concentration_warmup}, we have that $\lim_{\beta_0 \to \infty} p_0^\beta(x_\star) = +\infty$ for every local strict minimizer $x^\star$ of $h$ \emph{within} the support of $p_0(x)$, and $\lim_{\beta_0 \to \infty} p_0^\beta(x_\star) =0$ for $x^\star$  \emph{outside} the support of $p_0(x)$, 
see \Cref{sec:appendix_end_distribution}.
That is, $p_0^\beta$ concentrates on the \emph{feasible} local minimizers of $h$ as $\beta_0 \to +\infty$.

\Cref{thm:concentration_warmup} indicates that the guided diffusion process in the first stage yields a more effectively initialized distribution within the data manifold, bounded both above and below by a product of experts related to the original constrained optimization problem. However, it is known that guided diffusion cannot accurately sample from the product of experts $q_\beta$ \citep{du2023reduce,garipov2023compositional}, see more details in
% supplementary materials.
\Cref{sec:guided-prod}. 
While additional contrastive training of a surrogate objective has been proposed \citep{lu2023contrastive}, in this work, we do not consider such complex corrections. Instead, we rely on ideas from MCMC literature to ensure convergence.

%Sequential Monte Carlo corrections have been proposed to correct this behavior \citep{cardoso2023bayesian,wu2023practical}

\textbf{Stage II: Further correction with Langevin dynamics.}  In the second stage, we further use Langevin dynamics for accurate sampling from $\pi_{\beta}(x)$. The gradient of the log-density of the data distribution $\nabla\log p(x)$ can be obtained by setting the time of the score function to 0, \ie~$s_{\theta}(x, 0)$. The unadjusted Langevin algorithm is then given by
\begin{align}
    \textstyle x^{k+1} = x^k+ (s_{\theta}(x^k,0) - \beta\nabla_{x}h(x^k))\Delta t +\sqrt{2\Delta t}z,
    \vspace{-0.5em}
\end{align}
where $\Delta t$ is the step size and $z$ comes from a Gaussian distribution. In practice, we find that a constant $\beta$ is enough for this stage. When using the score-based parameterization $s_{\theta}(x,t)$, we cannot access the unnormalized log-density of the distribution. Therefore, we cannot use the MH correction step. Although the sampler is not exact without MH correction, it performs well in practice.

To further incorporate the MH correction step, we can adopt an energy-based parameterization following \citep{du2023reduce}:
$E_{\theta}(x, t) = -\frac{1}{2}\|\text{NN}_{\theta
}(x,t)\|^2$, where $E_{\theta}(x, t)$ is the energy function of the data distribution, and $\text{NN}_{\theta}(x,t)$ is a vector-output neural network. An additional benefit of MH correction is that it can enforce hard constraints. The vanilla Langevin dynamics includes a Brownian motion term, whose support spans the entire space (similar to a Gaussian distribution). As a result, samples may occasionally fall outside the constrained region. However, with a MH correction step, the acceptance ratio $p(x^k+1)/p(x^k)$ becomes zero if the proposed sample $x^{k+1}$ lies outside the support of the distribution, ensuring that such samples are disregarded.
% If $x^{k+1}$ is outside the data distribution ($p(x) = 0$), this point cannot be accepted. 
We provide more details of this energy-based parameterization in
% supplementary materials.
\Cref{section:energy}. 

%An additional benefit of MH correction is one can automatically tune the hyperparameters, such as the step size according to the acceptance rate \citep{du2023reduce}.

% \textbf{Extension to black-box objective.}
% Our framework can be easily adapted to black-box optimization. In such a case,  we utilize a logged dataset, denoted as $\{x_i, y_i\}_{i=1}^N$, where each $y_i$ represents the objective value evaluated at the point $x_i$. This dataset allows us to train a surrogate objective function $h_{\phi}(x)$ using deep neural networks.
%\vspace{-1.2em}
\subsection{Non-differentiable Objective: Iterative Importance Sampling with Diffusion Models}
\vspace{-0.5em}

% \Cref{sec:two-stage} provides a sampling that requires the gradient of the objective function. However, in many real-world optimization problem, we may not have access to the gradient.

% Despite the promise of gradient-based optimization, we do not have access to derivatives in many real-world optimization problem which spurs a series of derivative-free optimization method such as genetic algorithm, simulated annealing and zeroth-order optimization \citep{conn2009introduction}. 
When gradients of the objective function are unavailable, we can use self-normalized importance sampling (SNIS) \citep{rubinstein2016simulation}, which is easy and fast to implement. SNIS first proposes several particles from a proposal distribution. Then, resample the particles according to their weights, calculated as the ratio of the target density to the proposal density.

However, the performance of SNIS is heavily determined by the proposal distribution. A good proposal distribution should be close to the target distribution. To address this, we employ an iterative importance sampling with diffusion models to improve the initial proposal distribution. The pseudocode of our derivative-free sampling algorithm is provided in 
% supplementary materials.
\Cref{sec:snis}.

 \textbf{Initialization with SNIS.} We begin by randomly sampling \( S \) particles \(\{x_s^0\}_{s=1}^S\) from the diffusion model \( p_{\theta}(x) \), assigning each particle a weight \( w_s^0 = \frac{\pi_{\beta}(x_s^0)}{p_{\theta}(x_s^0)} = q_{\beta}(x_s^0) \). We then resample \( S \) particles based on the normalized weights \(\tilde{w}_s^0 = \frac{w_s^0}{\sum_{j=1}^S w_j^0} \) using multinomial 
 sampling with replacement.

\textbf{Diffusion-guided proposal for iterative importance sampling.} At the $k$-th iteration, we have \(S\) particles \(\{x_s^{k-1}\}_{s=1}^S\) resampled from the previous iteration. The key insight of our method is that these resampled particles form a proposal distribution closer to the target distribution. 
However, these resampled particles tend to be repetitive and lack diversity. To address this, we use a diffusion model to diversify the resampled particles, ensuring that the proposal distribution remains diverse and closely approximates the target distribution. 
Henceforth, we will use $x_{s}^k$ and $x_{s,0}^k$ interchangeably to refer to the particle at time $0$. We also omit the arrows denoting forward/reverse process of diffusion models for simplicity.

The diversify process works as follows: we first add noise to the particles \(x_{s,0}^{k-1}\) through a forward diffusion process until a randomly sampled time \(t\) to get \(x_{s,t}^{k-1} \sim p_{t|0}(x_{s,t}^{k-1} | x_{s,0}^{k-1})\). Then we  denoise them  back to obtain new particles \(x_{s,0}^{k} \sim p_{0|t}(x_{s,0}^{k} | x_{s,t}^{k-1})\). This proposal can be written as \(Q(x_s^{k} | x_s^{k-1}) = \int p_{t|0}(x_{s,t}^{k-1} | x_{s,0}^{k-1}) p_{0|t}(x_{s,0}^{k} | x_{s,t}^{k-1}) \, d{x_{s,t}^{k-1}}\). We use the Monte Carlo method to compute the marginalization by drawing \(J\) samples \(x_{s,t,j}^{k-1}\) for each particle: \(Q(x_s^{k} | x_s^{k-1}) \approx \frac{1}{J} \sum_{j=1}^J p_{0|t}(x_{s,0}^{k} | x_{s,t,j}^{k-1})\). Sampling \(x_{s,t,j}^{k-1}\) is straightforward as the forward process in diffusion models admits a closed-form conditional Gaussian distribution: \(p_{t|0}(x_{s,t}^{k-1} | x_{s,0}^{k-1}) = \mathcal{N}(x_{s,t}^{k-1} | \alpha_t x_{s,0}^{k-1}, \sigma_t^2\Id)\). Following \citep{song2023loss}, we approximate \(p(x_{s,0}^{k}|x_{s,t}^{k-1})\) as $\mathcal{N}(\frac{x^{k-1}_{s,t,j} + \sigma_t s_{\theta}(x_{s,t,j}^{k-1}, t)}{\alpha_t}, \frac{\sigma^2_t}{1+\sigma_t}\Id )$. The marginalization $Q(x_s^{k})$ is computed by enumerating all the particles from last iteration, \ie~$\frac{1}{S}\sum_{s'=1}^SQ(x_s^{k}|x_{s'}^{k-1})$. Each particle is assigned a weight \( w_s^{k} = \frac{\pi_{\beta}(x_s^{k})}{Q(x_s^{k})}\). We then resample \(S\) particles based on the normalized weights. This process is iterated for \(K\) steps. 

Note that by involving only resampling and diffusion models in each iteration, we can ensure staying within the data manifold, thereby satisfying hard constraints.

\section{Related Work}

\textbf{Diffusion models and data manifold.}
Diffusion models have demonstrated impressive performance in various generative modeling tasks, such as image generation \citep{song2020score, ddpm}, video generation \citep{ho2022imagen}, and protein synthesis \citep{watson2023novo, gruver2023protein}. 
Several studies reveal diffusion models implicitly learn the data manifold \citep{pidstrigach2022score, de2021diffusion,du2023flexible,wenliang2023score}. This feature of diffusion models has been used to estimate the intrinsic dimension of the data manifold in \citep{stanczuk2022your}. Moreover, the concentration of the samples on a manifold can be observed through the singularity of the score function. This phenomenon is well-understood from a theoretical point of view and has been acknowledged in \citep{debortoli2022convergence,chen2022sampling}.

\textbf{Optimization as sampling problems.}
Numerous studies have investigated the relationship between optimization and sampling \citep{ma2019sampling, stephan2017stochastic, trillos2023optimization, wibisono2018sampling, cheng2020interplay, cheng2020stochastic}. Sampling-based methods have been successfully applied in various applications of stochastic optimization when the solution space is too large to search exhaustively \citep{laporte1992vehicle} or when the objective function exhibits noise \citep{branke2004sequential} or countless local optima \citep{burke2005robust, burke2020gradient}. A prominent solution to global optimization is through sampling with Langevin dynamics \citep{gelfand1991recursive}, which simulates the evolution of particles driven by a potential energy function. Furthermore, simulated annealing \citep{kirkpatrick1983optimization} employs local thermal fluctuations enforced by Metropolis-Hastings updates to escape local minima \citep{metropolis1953equation, hastings1970monte}. More recently, \citep{zhang2023let} employs generative flow networks to amortize the cost of the sampling process for combinatorial optimization with both closed-form objectives and constraints.

\textbf{Learning for optimization.}
Recently, there has been a growing trend of adopting machine learning methods for optimization tasks. The first branch of work is model-based optimization, which focuses on learning a surrogate model for the black-box objective function. This model can be developed in either an online \citep{snoek2012practical, shahriari2015taking, srinivas2010gaussian,zhang2021unifying}, or an offline manner \citep{yu2021roma, trabucco2021conservative, fu2020offline, chen2023bidirectional, yuan2023importance}. Additionally, some research \citep{KumanMINs, DDOM, kim2023bootstrapped} has explored the learning of stochastic inverse mappings from function values to the input domain, utilizing generative models such as generative adversarial nets \citep{goodfellow2014generative} and diffusion models \citep{song2020score, ddpm}.

The second branch, known as ``Learning to Optimize'', involves training a neural network to address fully specified optimization problems. In these works, a model is trained using a distribution of homogeneous instances, to achieve generalization on similar unseen instances. Various learning paradigms have been used in this context, including supervised learning \citep{li2018combinatorial, gasse2019exact}, reinforcement learning \citep{li2016learning, khalil2017learning, kool2018attention}, unsupervised learning \citep{karalias2020erdos, wang2022unsupervised,min2023unsupervised}, and generative modeling \citep{sun2023difusco,li2023distribution}. 
%such as improving optimization solvers \citep{bengio2021machine}, end-to-end learning with optimizers \citep{kotary2021end}, and reinforcement learning \citep{mazyavkina2021reinforcement}. 
In contrast to our approach, these works typically involve explicitly defined objectives and constraints.

\textbf{Optimization under unknown constraints.} Existing work on optimization under unknown constraints are based on derivative-free optimization methods, such as generalized pattern search \citep{audet2004pattern}, mesh adaptive direct search \citep{audet2006mesh}, line search \citep{fasano2014linesearch, liuzzi2016derivative}, the Frank-Wolfe algorithm \citep{usmanova2019safe}, and stochastic zeroth-order constraint extrapolation \citep{nguyen2023stochastic}.

However, these works differ from ours and are not directly comparable. First, the settings are different---they require oracle evaluation of whether a proposed solution violates constraints during the optimization process, whereas we do not need this. We only require data samples from the feasible set. Second, these works are hardly applied to the high-dimensional problems that we focus on, such as molecule optimization, protein design, and robot morphology optimization.

% focus more on theoretical analysis and are typically applied to simple optimization problems with small search spaces. In contrast, we focus on much more challenging problems where the search space is extremely large. To the best of our knowledge, none of these methods can be effectively applied to our applications, such as molecule optimization, protein optimization, and robot morphology optimization.

\begin{figure*}[t]
    \centering
    \begin{center}
        \includegraphics[width=0.9\textwidth]{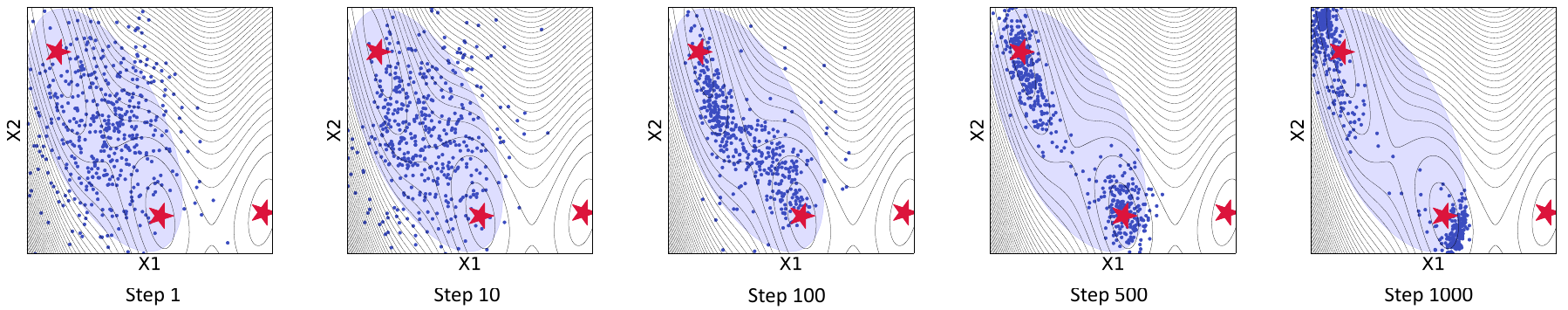}
    \end{center}
    %\vspace{-1em}
    \caption{Sampling trajectory of DiffOPT in the synthetic Branin experiment with unknown constraints. 
    Red stars denote the minimizes, and the blue region denotes the feasible space from which training data is sampled. \ours can effectively navigate towards the two feasible minimizers.}
    \label{fig:landscape_unknown}
\end{figure*}

\begin{figure*}[t]
    \centering
    \includegraphics[width=0.9\textwidth]{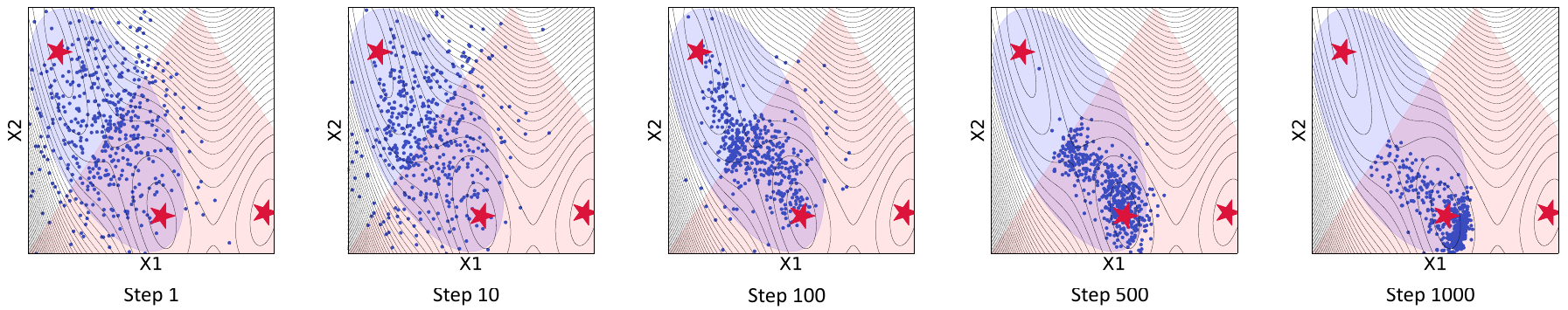}
    %\vspace{-1.2em}
    \caption{Sampling trajectory of DiffOPT in the synthetic Branin experiment with additional known constraints. Red stars denote the minimizers, the blue region denotes the feasible space from which training data is sampled and the pink region denotes the feasible space defined by the added given constraints. \ours can effectively navigate towards the unique minimizer at the intersection of the two feasible spaces.}
    \label{fig:landscape_partial_unknown}
\end{figure*}

\section{Experiment}
%\vspace{-1em}

\begin{table*}[t]
\centering
\resizebox{\textwidth}{!}{
\begin{tabular}{c c c c c c c c}
\bottomrule[1.5pt]
\addlinespace[5pt]
\vspace{5pt}
\textbf{Baseline}     & \textbf{TFBind8} & \textbf{TFBind10} & \textbf{Superconductor} & \textbf{Ant}      & \textbf{D'Kitty}  & \textbf{ChEMBL}   &\textbf{Mean Rank}   \\
\hline \addlinespace[5pt]
Dataset Best       & $0.439$     & $0.00532$    & $74.0$           & $165.326$  & $199.231$  & $383.7e3$  &- \vspace{3pt}\\
\hline \addlinespace[5pt]
CbAS            & $0.958$\std{0.018}     & $0.761$\std{0.067}      & $83.178$\std{15.372}         & $468.711$\std{14.593}  & $213.917$\std{19.863}  & $389.0e3$\std{0.5e3} & $6.33$\\
GP-qEI          & $0.824$\std{0.086}     & $0.675$\std{0.043}     & $92.686$\std{3.944}       & $480.049$\std{0.000}  & $213.816$\std{0.000}  & $388.1e3$\std{0.0} & $7.17$\\
CMA-ES          & $0.933$\std{0.035}     & $0.848$\std{0.136}     & $90.821$\std{0.661}         & $\mathbf{1016.409}$\std{906.407} & $4.700$\std{2.230}   & $388.4e3$\std{0.4e3} & $5.17$\\
Gradient Ascent & $\underline{0.981}$\std{0.010}     & $0.770$\std{0.154}      & $93.252$\std{0.886}        & $-54.955$\std{33.482}  & $226.491$\std{21.120}  & $390.1e3$\std{2.0e3} & $4.33$\\
REINFORCE            & $0.959$\std{0.013}     & $0.692$\std{0.113}      & $89.469$\std{3.093}         & $-131.907$\std{41.003} & $-301.866$\std{246.284} & $388.4e3$\std{2.1e3} & $7.33$\\
MINs       & $0.938$\std{0.047}     & $0.770$\std{0.177}      & $89.027$\std{3.093}         & $533.636$\std{17.938}  & $\underline{272.675}$\std{11.069}  & $\underline{391.0e3}$\std{0.2e3} & $4.50$\\
COMs            & $0.964$\std{0.020}     & $0.750$\std{0.078}      & $78.178$\std{6.179}         & $540.603$\std{20.205}  & $\mathbf{277.888}$\std{7.799}  & $390.2e3$\std{0.5e3}  & $4.50$\\
DDOM            & $0.971$\std{0.005}     & $\underline{0.885}$\std{0.367}      & $\underline{103.600}$\std{8.139}        & $\underline{548.227}$\std{11.725}  & $250.529$\std{10.992}  & $388.0e3$\std{1.1e3} \vspace{5pt} & $\underline{3.67}$\\
\hline \addlinespace[5pt]
Ours      & $\mathbf{0.987}$ \std{0.014}    &   $\textbf{0.924}$\std{0.224}         & $\mathbf{113.545}$\std{5.322}      &  $493.191$\std{18.165} 
      &  $261.673$\std{3.486}        &$\mathbf{391.1e3}$\std{3.4e3}  &$\mathbf{2.00}$\\
\addlinespace[2pt]
\bottomrule[1.5pt]
\end{tabular}
}
\vspace{-1em}
\caption{Results of offline black-box optimization on DesignBench. We report the mean and standard deviation across five random seeds. The best results are \textbf{bolded}, and the second best is \underline{underlined}.}
%\vspace{-0.4em}
\label{tab:designbench}
\end{table*}

\begin{table*}[]
\small
\centering
\resizebox{\textwidth}{!}{
\begin{tabular}{c|cccc|cccc|c}
\bottomrule[1.5pt]
& \multicolumn{4}{c|}{Top-1} & \multicolumn{4}{c|}{Top-10} &  \multirow{2}{*}{Invalidity $\downarrow$ }\\\cline{2-9}
& QED $\uparrow$   & SA $\downarrow$   & GSK3B $\uparrow$  & Sum $\uparrow$ & QED $\uparrow$   & SA $\downarrow$   & GSK3B $\uparrow$ & Sum $\uparrow$ &   \\  \hline
Dataset Mean       & $0.598$ & $0.204$ & $0.045$ & $0.439$ & $0.598$ & $0.204$ & $0.045$ & $0.439$ &  $0$ \\
Dataset Sum Best  & $0.846$ & $0.159$ & $0.99$  & $1.677$ & $0.771$ & $0.129$ & $0.877$ & $1.519$ &  $0$ \\
Dataset Best        & $0.947$ & $0.030$  & $0.99$  & $1.907$ & $0.945$ & $0.204$ & $0.947$ & $1.688$ &  $0$ \\ \hline
DDOM            & $0.790$\std{0.023} & $0.124$\std{0.007} & $\underline{0.856}$\std{0.046}  & $\underline{1.521}$\std{0.063} & $0.747$\std{0.033} & $0.141$\std{0.006} & $\underline{0.695}$\std{0.021} & $\underline{1.301}$\std{0.037} & $63.60$\std{3.61} \\
Gradient Ascent & $\underline{0.834}$\std{0.025} & $0.130$\std{0.024} & $0.784$\std{0.152}  & $1.487$\std{0.150} & $0.674$\std{0.043} & $0.134$\std{0.010} & $0.678$\std{0.047} & $1.218$\std{0.091} & $80.00$\std{19.20}        \\
GP-qEI          & $0.784$\std{0.059} & $0.149$\std{0.034} & $0.551$\std{0.106}  & $1.186$\std{0.089} & $0.743$\std{0.032} & $0.147$\std{0.010} & $0.370$\std{0.057} & $0.966$\std{0.028} & $63.80$\std{7.62}         \\
MINs            & $0.795$\std{0.156} & $0.163$\std{0.027} & $0.466$\std{0.249}  & $1.097$\std{0.200} & $\mathbf{0.838}$\std{0.059} & $0.145$\std{0.029} & $0.273$\std{0.149} & $0.966$\std{0.108} & $\underline{38.20}$\std{24.66}        \\
REINFORCE       & $\mathbf{0.865}$\std{0.047} & $\mathbf{0.083}$\std{0.013} & $0.062$\std{0.069}  & $0.843$\std{0.027} & $\underline{0.816}$\std{0.056} & $\mathbf{0.079}$\std{0.009} & $0.085$\std{0.072} & $0.822$\std{0.026} & $53.40$\std{106.80}       \\
CbAS            & $0.762$\std{0.119} & $0.138$\std{0.046} & $0.681$\std{0.077}  & $1.305$\std{0.049} & $0.687$\std{0.035} & $0.153$\std{0.010} & $0.593$\std{0.078} & $1.126$\std{0.059} & $46.80$\std{13.90}        \\
CMA-ES          & $0.446$\std{0.011} & $0.207$\std{0.122} & $0.012$\std{0.004}  & $0.250$\std{0.128} & $0.435$\std{0.033} & $0.230$\std{0.169} & $0.008$\std{0.001}  & $0.212$\std{0.205} & $880.00$\std{59.25} \\
\hline
Ours & $0.798$\std{0.023} & $\underline{0.100}$\std{0.031} & $\mathbf{0.944}$\std{0.023} & $\mathbf{1.641}$\std{0.018} & $0.786$\std{0.003} & $\underline{0.100}$\std{0.006} & $\mathbf{0.866}$\std{0.035} & $\mathbf{1.552}$\std{0.031} & $\mathbf{35.80}$\std{1.46} \\
\bottomrule
\end{tabular}
}
\vspace{-1em}
\caption{Results on the multi-objective molecule optimization task. Sum denotes the total objective (QED $+$ GSK3B $-$ SA). SA is normalized from the range 1-10 to 0-1. We report the mean and standard deviation across five random seeds.
The best results are \textbf{bolded}, and the second best is \underline{underlined}. Top-1 denotes the best solution found, and Top-10 denotes the average of the best ten solutions found. Invalidity denotes the number of invalid molecules in the 1000 generated samples.  }
\label{tab:multi}
%\vspace{-0.8em}
\end{table*}

In this section, we conduct experiments on (1) a synthetic Branin task, (2) six real-world offline black-box optimization tasks, and (3) a multi-objective molecule optimization task. Finally, we do ablation studies.

\subsection{Synthetic Branin Function Optimization}
We first validate our model on a synthetic Branin task. The Branin function \citep{dixon1978global} is given as
\begin{equation}
    f(x_1,x_2)=a(x_2-bx_1^2+cx_1-r)^2+s(1-t)\text{cos}(x_1)+s , 
\end{equation}
where $a=1$, $b=\frac{5.1}{4\pi^2}$, $c=\frac{5}{\pi}$, $r=6$, $s=10$, $t=\frac{1}{8\pi}$. The function has three  global minimas, $(-\pi,12.275), (\pi, 2.275)$, and $(9.42478, 2.475)$. %(\cref{fig:branin}). %with the minimum value of $-0.398$ (\cref{fig:branin}).

\textbf{Optimization with unknown constraints.} 
% \begin{wrapfigure}{r}{0.25\textwidth}
%   \begin{center}  \includegraphics[width=0.25\textwidth]{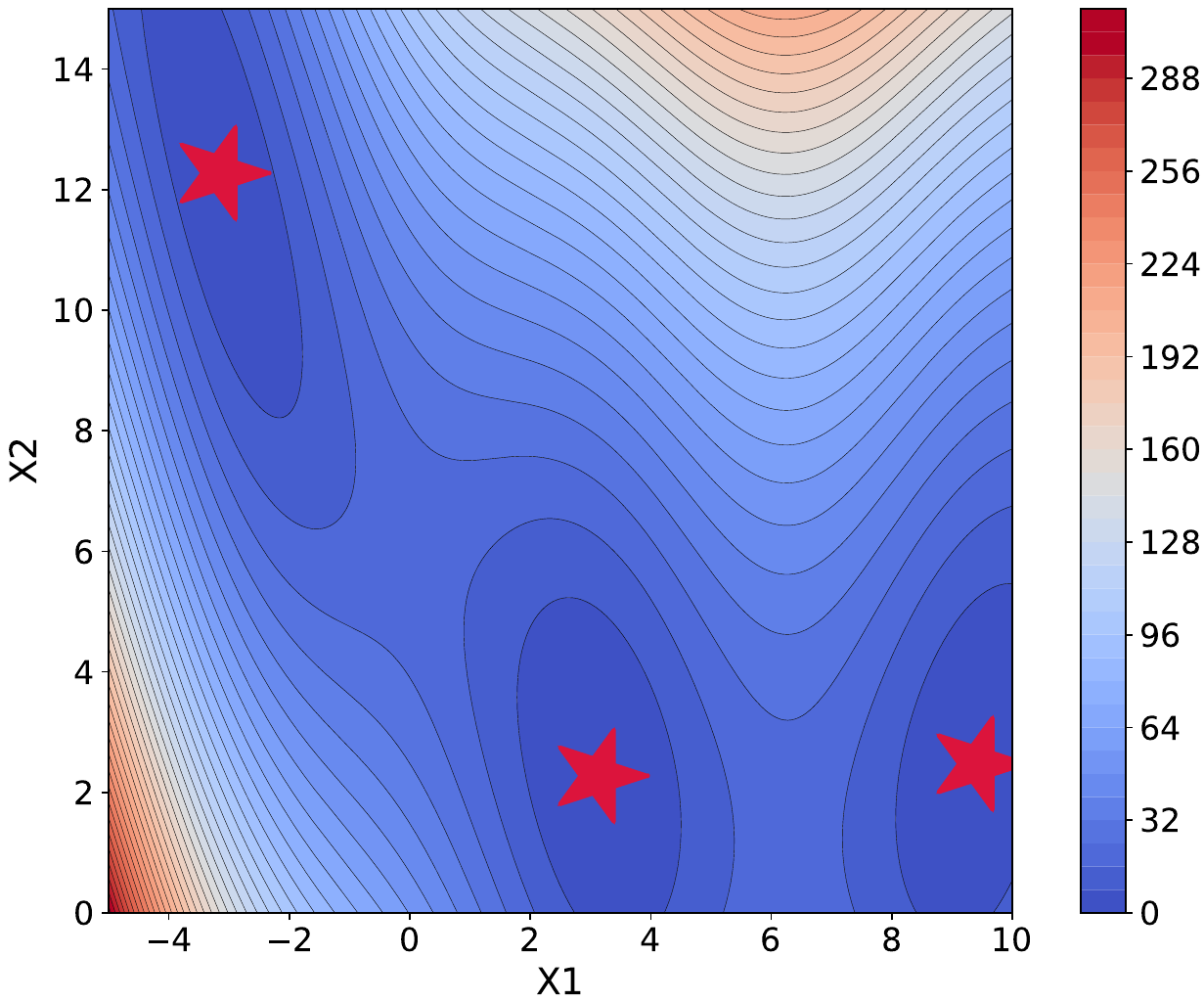}
%   \end{center}
%   %\vspace{-1.5em}
%   \caption{Branin function.}
%   \label{fig:branin}
%   %\vspace{-1.2em}
% \end{wrapfigure}
To assess the capability of \ours~ in optimizing functions under unknown constraints, we generate a dataset of 6,000 points, uniformly distributed within the feasible domain shaped like an oval. An effective optimizer is expected to infer the feasible space from the dataset and yield solutions strictly within the permissible region, \ie~$(-\pi,12.275)$ and $(\pi, 2.275)$. We train a diffusion model with Variance Preserving (VP) SDE \citep{song2020score} on this dataset. More details of the experimental setup are provided in  
% supplementary materials.
Appendix~\ref{appendix:exp}.
Figure~\ref{fig:landscape_unknown} illustrates the sampling trajectory of \ours, clearly demonstrating its capability in guiding the samples towards the optimal points confined to the feasible space.

\textbf{Compatible with additional known constraints.} 
Our framework is also adaptable to scenarios with additional, known constraints $C'$. In such instances, we introduce an extra objective function whose Boltzmann density is uniform within the constraint bounds and $0$ otherwise, \ie~$q'(x) \propto \exp[\beta' \cdot I(x \in C')]$, with $I(\cdot)$ being the indicator function. To demonstrate this capability, we incorporate a closed-form linear constraint alongside the implicit constraint represented by the dataset. This new constraint narrows the feasible solutions to only $(\pi, 2.275)$. As depicted in \cref{fig:landscape_partial_unknown}, \ours effectively navigates towards the sole viable minimizer within the constrained space delineated by the data-driven and explicitly stated constraints. This feature is particularly beneficial in practical applications, such as molecular optimization, where imposing additional spatial or structure constraints might be necessary during optimizing binding affinities with different protein targets \citep{du2022molgensurvey}.

%\vspace{-1em}
\subsection{Offline Black-box Optimization}
%\vspace{-1em}
We further evaluate \ours on the offline black-box optimization task, wherein a logged dataset is utilized to train a surrogate model that approximates the objective function. The surrogate model is trained on finite data and may have large fitness errors beyond the data distribution. At inference time, if we directly apply gradient ascent on the surrogate objective, it may produce out-of-distribution designs that “fool” the learned surrogate model into outputting a high value. Therefore, we need to constrain the optimization process within the data distribution because the trained surrogate is only reliable within this range. However, this data distribution constraints cannot be expressed analytically, and therefore this task can be viewed as an instance of optimization with unknown constraints.

Following \citep{DDOM}, we conduct evaluation on six tasks of DesignBench \citep{designbench}. \textbf{Superconductor} is to optimize for finding a superconducting material with high critical temperature. \textbf{Ant} nad \textbf{D'Kitty} is to optimize the robot morphology. \textbf{TFBind8} and \textbf{TFBind10} are to find a DNA sequence that maximizes binding affinity with specific transcription factors. \textbf{ChEMBL} is to optimize the drugs for a particular chemical property.

\textbf{Baselines.}
We compare \ours with multiple baselines, including gradient ascent, Bayesian optimization (GP-qEI) \citep{DDOM},  REINFORCE \citep{REINFORCE}, evolutionary algorithm (CAM-ES) \citep{Hansen}, and recent methods like MINS \citep{KumanMINs}, COMs \citep{COMs}, CbAS \citep{CbAS} and DDOM \citep{DDOM}. We follow \citep{DDOM} and set the sampling budget as $256$. More details of the experimental setups are provided in Appendix~\ref{appendix:exp}.

\textbf{Results.} Table~\ref{tab:designbench} shows the performance on the six datasets for all the methods. As we can see, \ours achieves an average rank of $2.0$, the best among all the methods. We achieve the best result on 4 tasks. Particularly, in the Superconductor task, \ours surpasses all baseline methods by a significant margin, improving upon the closest competitor by 9.6\%. The exceptional performance of \ours is primarily due to its application of a diffusion model to learn the valid data manifold directly from the data set, thus rendering the optimization process significantly more reliable. In contrast, the 
gradient ascent method, which relies solely on optimizing the trained surrogate model, is prone to settle on suboptimal solutions. 
Moreover, while DDOM \citep{DDOM} employs a conditional diffusion model to learn an inverse mapping from objective values to the input space, its ability to generate samples is confined to the maximum values present in the offline dataset. This limitation restricts its ability to identify global maximizers within the feasible space. The experimental results also demonstrate that \ours can consistently outperform DDOM except for the Ant Dataset. %On Ant, we find that the suboptimal performance of \ours is because the surrogate model is extremely difficult to train. We provide more analyses in Appendix~\ref{section:ant}.

\vspace{-0.5em}
\subsection{Multi-objective Molecule Optimization}
\vspace{-0.5em}
An additional advantage of incorporating the data distribution constraints for offline black-box optimization is their direct impact on enhancing the validity of generated solutions. However, it's worth noting that DesignBench lacks a specific metric for assessing validity. Therefore, we further test on a multi-objective molecule optimization task and extend our evaluation to include validity. In this task, we have three objectives: the maximization of the quantitative estimate of drug-likeness (QED), and the activity against glycogen synthase kinase three beta enzyme (GSK3B), and the minimization of the synthetic accessibility score (SA). Following \citep{hiervae,limo}, we utilize a pre-trained autoencoder from \citep{hiervae} to map discrete molecular structures into a continuous low-dimensional latent space and train neural networks as proxy functions for predicting molecular properties to guide the optimization. Detailed experimental setups are provided in the 
% supplementary material.
Appendix~\ref{appendix:exp}.

\textbf{Results.}
For each method, we generate 1,000 candidate solutions, evaluating the three objective metrics solely on those that are valid. As we can see from \cref{tab:multi}, \ours can achieve the best validity performance among all the methods. In terms of optimization performance, \ours can achieve the best overall objective value. We further report the average of the top 10 solutions found by each model and find that DiffOPT is also reliable in this scenario.
This multi-objective optimization setting is particularly challenging, as different objectives can conflict with each other. The superior performance of \ours is because we formulate the optimization problem as a sample problem from the product of experts, which is easy for compositions of various objectives.

\begin{figure*}[t]
    \centering \includegraphics[width=0.95\textwidth]{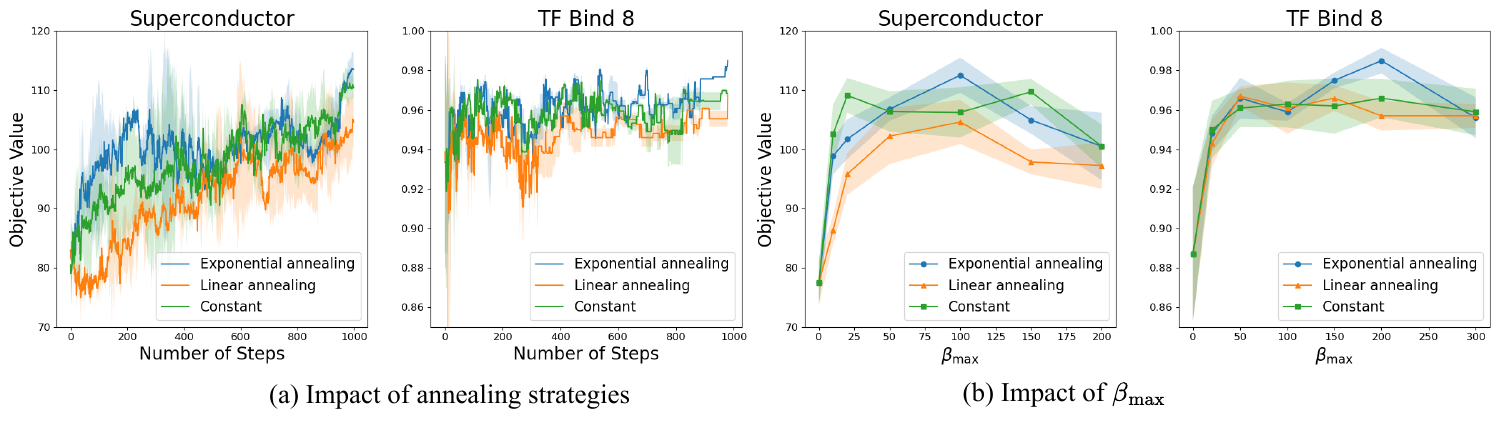}
    \vspace{-1.5em}
    \caption{Impact of annealing strategies and $\beta_{\rm max}$ in the guided diffusion stage. $\beta_{\rm max}$ is the value of $\beta$ at the end of annealing.}
    \label{fig:annealing-main}
\end{figure*}

\subsection{Derivative-free Optimization}

\begin{table}[h]
% \begin{wraptable}{r}{0.4\textwidth} % 'r' for right alignment, 0.5\textwidth for width
    \centering
    \begin{tabular}{@{}lll@{}}
        \toprule
             & Hypervolume $\uparrow$ & Invalidity $\downarrow$ \\ \midrule
        EA   & $0.558$\std{0.029}  & $37.6$\std{8.73}      \\
        IS & $0.493$\std{0.108} & $\mathbf{0.0}$\std{0.0} \\
        Ours & $\mathbf{0.590}$\std{0.060} & $\mathbf{0.0}$\std{0.0} \\ \bottomrule
    \end{tabular}
    \caption{Results of derivative-free optimization on the multi-objective molecule optimization task.}
    \label{table:derivative-free}
% \end{wraptable}
\end{table}

In this subsection, we explore the scenario where the objective function is not differentiable and examine the effectiveness of the proposed iterative importance sampling with diffusion models. We continue using the molecule optimization task. Here, we do not train a surrogate objective using offline data but instead directly use the oracle function from Rdkit \citep{rdkit}. We compare our method with evolutionary algorithm (EA; \citep{Hansen}), which is a common technique in derivative-free multi-objective optimization. Additionally, we compare our method with one-step importance sampling (IS) using diffusion models as the proposal distribution. \Cref{table:derivative-free} provides the performance of all the methods. As we can see, \ours can achieve the best hypervolume and validity among all the methods. 

\subsection{Ablation Study}

\begin{table}[h]
\centering
% \begin{wraptable}{r}{0.55\textwidth}
\small
\resizebox{0.45\textwidth}{!}{

\begin{tabular}{@{}lcc@{}}
\toprule
                        & Superconductor & TFBind8 \\ \midrule
Best Baseline           &   $103.600\pm 8.139$             &    $0.981\pm 0.010$       \\
Only Stage I            &     $112.038\pm 6.783$           &  $0.984\pm 0.012 $         \\
Only Stage II           &    $92.432\pm 8.635$            &   $0.951\pm 0.028$        \\
Stage I + Stage II      &     $113.545\pm 5.322$           &         $0.987\pm 0.014$  \\
Stage I + Stage II + MH &         $\textbf{114.945}\pm 3.615$       &        $\textbf{0.989}\pm 0.021$   \\ \bottomrule
\end{tabular}}
\caption{Ablation study on the two-stage sampling.}
\label{table:ablation-two-stage-main}

% \end{wraptable}
\end{table}

\textbf{Impact of two-stage sampling.}
\cref{table:ablation-two-stage-main}  shows the impact of two-stage sampling on performance. Our findings reveal that even after the initial stage, \ours outperforms the top-performing baseline on both datasets. Relying solely on Langevin dynamics, without the warm-up phase of guided diffusion, results in significantly poorer results. This aligns with our discussion in \cref{sec:two-stage}, where we attributed this failure to factors such as the starting distribution, the schedule for step size adjustments, and the challenges posed by undefined gradients outside the feasible set. Integrating both stages yields a performance improvement as the initial stage can provide a better initialization within the data manifold  for the later stage (\cref{thm:concentration_warmup}). Adding the MH correction step further enhances results, leading to the best performance observed.

\textbf{Impact of annealing strategies.}  We study the influence of different annealing strategies for $\beta$ during the guided diffusion stage, focusing on the superconductor and TFBind8 datasets. We explore three strategies: constant, linear annealing, and exponential annealing. Figure~\ref{fig:annealing-main}(a) presents the performance across various diffusion steps. We find that our method is not particularly sensitive to the annealing strategies. However, it is worth noting that exponential annealing exhibits a marginal performance advantage over the others.

We also investigate how the value of $\beta$ at the end of annealing, denoted as $\beta_{\rm max}$, affects model performance in Figure~\ref{fig:annealing-main}(b). We find that increasing $\beta_{\rm max}$ initially leads to better performance. However, beyond a certain threshold, performance begins to decrease. It is noteworthy that the optimal value varies across different annealing strategies. Particularly, at $\beta_{\rm max} = 0$, the model reverts to a pure diffusion process, exhibiting the lowest performance due to the lack of guidance from the objective function.

\textbf{Sample efficiency.} We explore the sample efficiency of \ours at both training and testing stages. \Cref{fig:training_data_size} (in \Cref{sec:appendix:ablation}) shows the performance of various methods versus the ratio of training data on Superconductor, TFBind8 and multi-objective molecule optimization. As we can see, on all the three datasets, DiffOPT can outperform all. Our method is also sample efficient during inference. \Cref{fig:training_sample} (in \Cref{sec:appendix:ablation}) shows the performance versus number of samples at inference stage. Notably, on both Superconductor and TFBind8, DiffOPT consistently outperforms all the baseline methods for various sample sizes during inference. It is also important to highlight that our method consistently achieves much greater sample efficiency than DDOM at both training and inference stages, despite both approaches leveraging diffusion models.

% \cref{table:ablation-two-stage} (in Appendix~\ref{sec:appendix:ablation}) shows the impact of two-stage sampling on performance. Our findings reveal that even after the initial stage, \ours outperforms the top-performing baseline on both datasets. Relying solely on Langevin dynamics, without the warm-up phase of guided diffusion, results in significantly poorer results. This aligns with our discussion in \cref{sec:two-stage}, where we attributed this failure to factors such as the starting distribution, the schedule for step size adjustments, and the challenges posed by undefined gradients outside the feasible set. Integrating both stages yields a performance improvement as the initial stage can provide a better initialization within the data manifold  for the later stage (\cref{thm:concentration_warmup}). Adding the MH correction step further enhances results, leading to the best performance observed. 
% We also study the impact of annealing strategies and sample efficiency of \ours in 
% % supplementary materials.
% \cref{sec:appendix:ablation}.

\vspace{-0.5em}

\section{Conclusion}
\vspace{-0.5em}
In this paper, we propose \ours to solve optimization problems where analytic constraints are unavailable. We learn the unknown feasible space from data using a diffusion model and then reformulate the original problem as a sampling problem from the product of (i) the density of the data distribution learned by the diffusion model and (ii) the Boltzmann density defined by the objective function. For differentiable objectives, we propose a two-stage framework consisting of a guided diffusion stage for warm-up and a Langevin dynamics stage for further correction. For non-differentiable objectives, we propose an iterative importance sampling method with diffusion models. Our experiments validate the effectiveness of \ours. %Due to the space limit, we discuss limitations and future work in 

\section*{Acknowledgement}

This work was supported in part by NSF IIS-2008334, IIS-2106961, IIS-2403240, and CAREER IIS-2144338. Yuanqi Du and Carla P. Gomes was supported by the Eric and Wendy Schmidt AI in Science Postdoctoral Fellowship, a Schmidt Futures program; the National Science Foundation (NSF), the Air
Force Office of Scientific Research (AFOSR); the Department of Energy; and the Toyota Research Institute (TRI). Dongxia Wu and Yi-an Ma was supported by the U.S. Department Of Energy, Office of Science, and NSF Grant 2112665 (TILOS), CDC-RFA-FT-23-0069 and DARPA AIE FoundSci.

Y.D. would like to thank Yunan Yang, Guan-horng Liu, Ricky T.Q. Chen, Ge Liu and Yimeng Zeng for their helpful discussions.

\bibliographystyle{plainnat}
\bibliography{references}

\appendix
\onecolumn
\newpage

\begin{center}
	{\Large \textbf{Appendix for \ours}}
\end{center}

\startcontents[sections]
\printcontents[sections]{l}{1}{\setcounter{tocdepth}{2}}

\section{Limitations and Future Work}
\label{appendix:limitations}
We discuss limitations and possible extensions of \ours. (i) \emph{Manifold preserving}. The guided diffusion may deviate from the manifold that the score network is trained, leading to error accumulations. One approach to mitigate this is to incorporate manifold constraints during the guided diffusion phase \citep{chung2022improving, he2023manifold}. (ii) \emph{Online learning}. We have applied \ours in the offline black-box optimization (BBO) setting. Considering the unknown constraints not only benefits the offline setting but also helps the online BBO. In the context of online BBO, we propose molecules and subsequently receive evaluations from the ground-truth simulator at each iteration to train the surrogate objective. Proposing a higher proportion of valid molecules can significantly increase the sampling efficiency of training the surrogate objective. Diffusion models also provide a probabilistic framework for estimating uncertainty in surrogate objective training.~\citep{kong2023uncertainty, kong2024two, li2024muben, zhuang2023dygen}.

\section{Broader Impacts}
\label{appendix:impact}
Optimization techniques can be used to solve a wide range of real-world problems, from decision making (planning, reasoning, and scheduling), to solving PDEs, and to designing new drugs and materials. The method we present in this paper extends the scope of the previous study to a more realistic setting where (partial) constraints for optimization problems are unknown, but we have access to samples from the feasible space. We expect that by learning the feasible set from data, our work can bring a positive impact to the community in accelerating solving real-world optimization problems and finding more realistic solutions. However, care should be taken to prevent the method from being used in harmful settings, such as optimizing drugs to enhance detrimental side effects.

\section{(Metropolis-adjusted) Langevin Dynamics.}
\label{sec:mala}

Langevin dynamics is a class of Markov Chain Monte Carlo (MCMC) algorithms that aims to generate samples from an unnormalized density $\pi(x)$ by simulating the differential equation
\begin{align}
    \textnormal{d}\mathbf{x}_t = \nabla_{\mathbf{x}}\log\pi(\mathbf{x}_t)\textnormal{d}t + \sqrt{2}\textnormal{d}\mathbf{w}_t.
\label{eq:appendix_langevin}
\end{align}
Theoretically, the continuous SDE of \cref{eq:appendix_langevin} is able to draw exact samples from $\pi(x)$. However, in practice, one needs to discretize the SDE using numerical methods such as the Euler-Maruyama method \citep{kloeden1992stochastic} for simulation. The Euler-Maruyama approximation of \Cref{eq:appendix_langevin} is given by
\begin{align}
    x_{t+\Delta t} = x_t + \nabla \log \pi(x) + \sqrt{2\Delta t}z, \qquad z \sim \mathcal{N}(0,1),
    \label{eq:appendix:discrete_langevin}
\end{align}
where $\Delta t$ is the step size. By drawing $x_0$ from an initial distribution and then simulating the dynamics in \cref{eq:appendix_langevin}, we can generate samples from $\pi(x)$ after a 'burn-in' period.
This algorithm is known as the Unadjusted Langevin Algorithm (ULA) \citep{roberts1996exponential}, which requires $\nabla \log \pi(x)$ to be $L$-Lipschitz for stability.

The ULA always accepts the new sample proposed by \cref{eq:appendix:discrete_langevin}. In contrast, to mitigate the discretization error when using a large step size, the Metropolis-adjusted Langevin Algorithm (MALA) \citep{grenander1994representations} uses the Metropolis-Hastings algorithm to accept or reject the proposed sample. Specifically, we first generate a proposed update $\hat x$ with \cref{eq:appendix_langevin}, then with probability $\text{min}(1, \frac{\pi(\hat x)\mathcal{N}(x|\hat x + \Delta t\cdot\nabla_{\hat x} \log \pi(\hat x), 2\Delta t)}{\pi(x)\mathcal{N}(\hat x|x + \Delta t\cdot\nabla_x \log \pi(x), 2\Delta t)}$), we set $x_{t+\Delta t}=\hat x$, otherwise $x_{t+\Delta}=x_t$. We provide the pseudocode of both algorithms in \cref{alg:mala}.

\begin{algorithm}
\caption{Sampling via the (Metropolis-Adjusted) Langevin dynamics}
\label{alg:mala}
\begin{algorithmic}[1] 
\REQUIRE unnormalized density $\pi(x)$, step size $\Delta t$
\STATE $x \sim \text{initial distribution}$
\STATE $\mathcal{X}=\varnothing$
\FOR{number of iterations}
\STATE $\hat x = x + \nabla_x \log \pi(x) \Delta t + \sqrt{2\Delta t}\cdot z, \;\; z \sim \mathcal{N}(0,1)$
\IF{applying Metropolis-Hastings test}
\STATE $u \sim \text{Uniform}[0,1]$
\STATE $\log P_{\text{accept}} = \log\frac{\pi(\hat x)\mathcal{N}(x|\hat x + \Delta t\cdot\nabla_{\hat x} \log \pi(\hat x), 2\Delta t)}{\pi(x)\mathcal{N}(\hat x|x + \Delta t\cdot\nabla_x \log \pi(x), 2\Delta t)}$
\STATE $\text{if }\;\; \log P_{\text{accept}} > \log u, \;\;\text{ then }\;\; x \gets \hat x$
\ELSE 
\STATE $x \leftarrow \hat x$
\ENDIF
\STATE $\mathcal{X} \gets \mathcal{X} \cup x$
\ENDFOR
\STATE \textbf{Return} $\mathcal{X}$
\end{algorithmic}
\end{algorithm}

\section{Pseudocode of the Two-Stage Sampling}
\label{sec:two-alg}

The pseudocode for the proposed  two-stage sampling method is provided in \Cref{alg:sampling}.

\begin{algorithm}[t]
\caption{Sampling via \ours for differentiable objective}
\label{alg:sampling}
\begin{algorithmic}[1] 
\REQUIRE inverse temperature schedule $\beta(t)$, diffusion volatility schedule $g(t)$ and drift $f(x, t)$, score model $s_\theta(x_t,t)$, energy function of the data distribution $E_{\theta}(x,t)$ if applying the MH correction.
\STATE $\mathcal{X} \gets \varnothing$
\STATE Sample $x_0\sim\mathcal{N}(0,\Id)$
\STATE \lightblue{// Stage I: \textit{Warm-up with guided diffusion.}}
\FOR{$t=0,\ldots,T$} 
\STATE Draw $z\sim\mathcal{N}(0,\Id)$, define $\tau = T-t$
\STATE $x_{t+\Delta t}\leftarrow x_t + [-f(x_t,\tau)+g^2(\tau)s_\theta(x_t,\tau) $
\STATE $\qquad \qquad\qquad\; -\beta(\tau)\nabla_{x_t} h(x_t)]\Delta t  + g(\tau)\sqrt{\Delta t} z$
\ENDFOR
\STATE \lightblue{// Stage II: \textit{Further correction with Langevin dynamics.}}
\FOR{$t = T,\ldots,T'$}
\STATE Draw $z\sim\mathcal{N}(0,\Id)$
\STATE $\hat x\gets x + [s_\theta(x, 0) - \beta \nabla h(x)]\Delta t + \sqrt{2\Delta t} z$
\IF{applying Metropolis-Hastings test}
\STATE $u \sim \text{Uniform}[0,1]$
\STATE $\ell_{\theta}(\hat x) = E_{\theta}(\hat x,0) - \beta h(\hat x)$
\STATE $\ell_{\theta}(x) = E_{\theta}(x,0) - \beta h(x)$
\STATE  $\ell(\hat x,x) = -\| x- \hat x - \Delta t [s_\theta(\hat x, 0) - \beta \nabla h(\hat x)]\|^2$
\STATE  $\ell(x,\hat x) = -\| \hat x- x - \Delta t [s_\theta(x, 0) - \beta \nabla h(x)]\|^2$
\STATE $\ell_{\text{acc}} = \ell_{\theta}(\hat x) - \ell_{\theta}(x)  + (\ell(\hat x,x) - \ell(x,\hat x))/(2\Delta t) $
\STATE $\text{if }\;\; \ell_{\text{acc}} > \log(u), \;\;\text{ then }\;\; x \leftarrow \hat x$
\ELSE 
\STATE $x \leftarrow \hat x$
\ENDIF
\STATE $\mathcal{X} \gets \mathcal{X} \cup \{x\}$
\ENDFOR
\STATE \textbf{Return} $\mathcal{X}$
\end{algorithmic}
\end{algorithm}

\section{Illustration of  Why Guided Diffusion Cannot Sample from the Product of Experts}
\label{sec:guided-prod}

The primary limitation of relying solely on Stage I is its inability to theoretically sample from our desired true target distribution, the product of distributions $\pi_\beta \propto p(x)q_{\beta}(x)$. This is because the score of the diffused marginal distribution does not directly correspond to the aggregate of the scores from each individual distribution,
\begin{align} 
\nabla_{x} \log \pi_{\beta}^t(x_t) &= \nabla_x \log\int p_{0}(x_0)q_{\beta}(x_0)p_{t|0}(x_t|x_0){\mathrm d}x_0\\ &\neq \nabla_x \log q_{\beta}(x_t)+ \nabla_x \log \underbrace{\int p_0(x_0)p_{t|0}(x_t|x_0)\mathrm{d}x_0,}_{p_t(x_t)}\quad t > 0, 
\end{align}
where $p_{t|0}(x_t|x_0)$ is the conditional density of the forward SDE starting at $x_0$.

Therefore we have to include Langevin dynamics (Stage II) as an optional stage to correct the bias introduced in Stage I. This stage can provide a theoretical guarantee for drawing exact samples from the product of distributions, despite the empirical observation that it offers only marginal performance improvements.

It's important to note that Stage I is essential because its output focuses on feasible minimizers under specific conditions, offering an improved initialization for Langevin dynamics. This has been demonstrated in our ablation study.

\section{Pseudocode of the Proposed Derivative-free Sampling }
\label{sec:snis}

The pseudocode of our derivative-free sampling algorithm is provided in \Cref{alg:snis}.

\begin{algorithm}[t]
\caption{Sampling via \ours for non-differentiable objective}
\label{alg:snis}
\begin{algorithmic}[1] 
\REQUIRE inverse temperature schedule $\beta(t)$, diffusion volatility schedule $g(t)$ and drift $f(x, t)$, score model $s_\theta(x_t,t)$.
\STATE \lightblue{// Initialization}
\STATE Sample $S$ particles $\{x_s^0\}_{s=1}^S$ from diffusion model
\STATE Compute $w_s^0 = \frac{\pi_{\beta}(x_s^0)}{p_{\theta}(x_s^0)} = q_{\beta}(x_s^0)$ for each particle
\STATE Normalize the weight $\tilde{w}_s^0=\frac{w_s^0}  {\sum_{s=1}^Mw_s^0}$ for each particle
\STATE Resample $S$ particles $\{x_s^0\}_{s=1}^S$ according to the weights
\STATE \lightblue{// Iterative Importance Sampling}
\FOR{$k=1,\cdots, K+1$}
\STATE Sample $t \sim \mathcal{U}[0,T]$
\FOR{$s=1,\cdots, S$}
\STATE \lightblue{// \(x_{s}^k\equiv x^k_{s,0}\)}
\STATE Add noise to the  particle by forward diffusion until time $t$:  \(x_{s,t}^{k-1} \sim p_{t|0}(x_{s,t}^{k-1} | x_{s,0}^{k-1})\)
\STATE Denoise the particle through backward diffusion: \(x_{s,0}^{k} \sim p_{0|t}(x_{s,0}^{k} | x_{s,t}^{k-1})\)
\STATE Sample \(x_{s,t,j}^{k-1} \sim p_{t|0}(x_{s,t}^{k-1} | x_{s,0}^{k-1}) = \mathcal{N}(x_{s,t}^{k-1}|\alpha_t x_{s,0}^{k-1}, \frac{\sigma_t^2}{1+\sigma_t}\Id)\) for $J$ times
\STATE \(Q(x_s^{k} | x_s^{k-1}) \approx \frac{1}{J} \sum_{j=1}^J \mathcal{N}\left(\frac{x_{s,t,j} + \sigma_t s_{\theta}(x_{s,t,j}^{k-1}, t)}{\alpha_t}, \frac{\sigma_t^2}{1+\sigma_t}\Id\right)\)
\STATE Marginalize to get $Q(x_s^k) \approx \frac{1}{S}\sum_{s=1}^S Q(x_s^k|x^{k-1}_s)$
\ENDFOR
\STATE Compute \( w_s^k = \frac{\pi_{\beta}(x_s^k)}{Q(x_s^{k})}\) for each particle
\STATE Normalize the weight for each particle: \(\tilde{w}_s^{k} = \frac{w_s^k}{\sum_{j=1}^S w_j^k} \)
\STATE Resample $M$ particles $\{x_s^k \}_{s=1}^S$ according to the weights
\ENDFOR
% \STATE $\mathcal{X} \gets \varnothing$
\STATE \textbf{Return} $\mathcal{X}=\{x_s^{K+1}\}_{s=1}^S $
\end{algorithmic}
\end{algorithm}

\section{Energy-based Parameterization}
\label{section:energy}

In a standard diffusion model, we learn the score of the data distribution directly as $s_{\theta}(x,t)=\nabla \log p_t(x)$. This parameterization can be used for ULA, which only requires gradients of log-likelihood. However, to incorporate the Metropolis-Hastings (MH) correction step, access to the unnormalized density of the data distribution is necessary to calculate the acceptance probability.

To enable the use of MH correction, we can instead learn the energy function of the data distribution, \ie~$p(x,t)\propto e^{E_{\theta}(x,t)}$. The simplest approach is to use a scalar-output neural network, denoted as $\text{NN}_{\theta}(x,t):\mathbb{R}^d\times \mathbb{R}\rightarrow \mathbb{R}$, to 
parameterize $E_{\theta}(x,t)$. By taking the gradient of this energy function with respect to the input $x$, we can derive the score of the data distribution. However, existing works have shown that this parameterization can cause difficulties during model training \citep{salimans2021should}. Following the approach by \citep{du2023reduce}, we define the energy function as $E_{\theta}(x,t) = -\frac{1}{2}|\text{NN}_{\theta}(x,t)|^2_2$, where $\text{NN}_{\theta}(x,t)$ is a vector-output neural network mapping from $\mathbb{R}^d\times\mathbb{R}$ to $\mathbb{R}$. Consequently, the score of the data distribution is represented as $s_{\theta}(x,t) = -\text{NN}_{\theta}(x,t) \nabla_x \text{NN}_{\theta}(x,t)$.

\section{End Distribution of the Warm-up Stage}
\label{sec:appendix_end_distribution}

In this section, we study in further detail the warm-up stage of \ours. We recall that we consider a process of the following form 
\begin{equation}
\label{eq:warm-up-process}
    \rmd \bfy_t^\beta = [-f(\bfy_t^\beta, 1-t) + g(1-t)^2 \nabla \log p_{1-t}(\bfy_t^\beta) - \beta(1-t) \nabla h(\bfy_t^\beta)] \rmd t + g(1-t) \rmd \bfw_t , \qquad \bfy_0^\beta \sim \bfx_T
\end{equation}
where $T=1$ and $p_t$ is the density w.r.t. Lebesgue measure of the distribution of $\bfx_t$ where 
\begin{equation}
\label{eq:forward_process}
    \rmd \bfx_t = f(t, \bfx_t) \rmd t + g(t) \rmd \bfw_t , \qquad \bfx_0 \sim p_0 .
\end{equation}
We recall that under mild assumption \citep{cattiaux2023time}, we have that $(\hat{\bfy}_t)_{t \in [0,1]} = (\bfx_{1-t})_{t \in [0,1]}$ satisfies
\begin{equation}
    \rmd \hat{\bfy}_t = [-f(\hat{\bfy}_t, 1-t) + g(1-t)^2 \nabla \log p_{1-t}(\hat{\bfy }_t)] \rmd t + g(1-t) \rmd \bfw_t , \qquad \hat{\bfy}_0 = \bfx_T . 
\end{equation}
Let us highlight some differences between \eqref{eq:warm-up-process} and the warm-up process described in Algorithm \ref{alg:sampling}. First, we note that we do not consider an approximation of the score but the real score function $\nabla \log p_t$. In addition, we do not consider a discretization of \eqref{eq:warm-up-process}. This difference is mainly technical. The discretization of diffusion processes is a well-studied topic, and we refer to \citep{de2021diffusion,benton2023linear,conforti2023score,chen2022sampling} for a thorough investigation. Our contribution to this work is orthogonal as we are interested in the role of $h$ on the distribution. Our main result is \Cref{prop:concentration} and details how the end distribution of the warm-up process concentrates on the minimizers of $h$, which also support the data distribution $p_0$.

We first show that under assumptions on $h$, $q_T^\beta$, the density w.r.t. the Lebesgue measure of $\bfy_T$ has the same support as $p_0$. We denote by $\supp(p_0)$ the support of $p_0$. We consider the following assumption.
\begin{assumption}
\label{assumption:manifold}
We have that for any $t \in [0,1]$, $g(t) = g(0)$ and $f(t,x) = -\gamma_0 x$ with $\gamma_0 > 0$. Assume that $p_0$ has bounded support, i.e. there exists $R > 0$ such that $\supp(p_0) \subset \mathrm{B}(0, R)$ with $\mathrm{B}(0,R)$ the ball with center $0$. In addition, assume that $h$ is Lipschitz.
\end{assumption}

Then, we have the following proposition.

\begin{proposition}
\label{prop:concentration_manifold}
    Assume \textup{\textbf{A\ref{assumption:manifold}}}. Then, we have that for any $\beta > 0$, $\bfy_T^\beta \in \supp(p_0)$.
\end{proposition}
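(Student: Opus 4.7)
The plan is to invoke Girsanov's theorem and transfer the known marginal law $\hat{\bfy}_T\sim p_0$ of the exact reverse SDE over to the warm-up process \eqref{eq:warm-up-process}. The key observation is that \eqref{eq:warm-up-process} differs from the reverse SDE for $\hat{\bfy}$ only by the additive drift $\tilde b(t,y)=-\beta(1-t)\nabla h(y)$, and under Assumption \textbf{A\ref{assumption:manifold}} this perturbation is uniformly bounded in $(t,y)$: $h$ Lipschitz gives $\|\nabla h\|_\infty<\infty$, and $\beta(1-t)$ is bounded on $[0,T]$ by the schedule considered throughout. The resulting change of measure will be equivalent to the original, so the law of $\bfy_T^\beta$ will be equivalent to $p_0$, and in particular supported inside $\supp(p_0)$.

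More concretely, I would fix a filtered space $(\Omega,\mathcal{F},(\mathcal{F}_t),\mathbb{P})$ carrying a Brownian motion $\bfw$ and a weak solution $\hat{\bfy}$ of the exact reverse SDE with $\hat{\bfy}_0\sim\mathrm{Law}(\bfx_T)$, which exists under \textbf{A\ref{assumption:manifold}} by the time-reversal results cited in the paper and satisfies $\hat{\bfy}_T\sim p_0$. Set $\varphi_t=-\beta(1-t)\nabla h(\hat{\bfy}_t)/g(0)$ and consider the Dol\'eans--Dade exponential
\[
 L_T \;=\; \exp\!\Bigl(\int_0^T \varphi_s\cdot d\bfw_s \;-\; \tfrac{1}{2}\int_0^T |\varphi_s|^2\, ds\Bigr).
\]
Because $|\varphi_t|$ is deterministically bounded, Novikov's condition holds trivially, $L_T$ is a strictly positive true $\mathbb{P}$-martingale, and $\mathbb{Q}:=L_T\cdot\mathbb{P}$ is a probability measure equivalent to $\mathbb{P}$. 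By Girsanov, $\tilde{\bfw}_t:=\bfw_t-\int_0^t\varphi_s\,ds$ is a $\mathbb{Q}$-Brownian motion and, substituting $d\bfw=d\tilde{\bfw}+\varphi\,dt$ into the reverse SDE, one sees that $\hat{\bfy}$ satisfies exactly \eqref{eq:warm-up-process} under $\mathbb{Q}$. Weak uniqueness of \eqref{eq:warm-up-process} then yields $\mathrm{Law}_{\mathbb{Q}}(\hat{\bfy}_T)=\mathrm{Law}(\bfy_T^\beta)$.

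To conclude, absolute continuity is preserved under pushforwards, so $\mathbb{Q}\sim\mathbb{P}$ gives $\mathrm{Law}(\bfy_T^\beta)=\mathrm{Law}_{\mathbb{Q}}(\hat{\bfy}_T)\sim\mathrm{Law}_{\mathbb{P}}(\hat{\bfy}_T)=p_0$, whence $\supp(\mathrm{Law}(\bfy_T^\beta))\subseteq\supp(p_0)$, i.e.\ $\bfy_T^\beta\in\supp(p_0)$ almost surely.

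The hard part will be handling the singularity of the score $\nabla\log p_{1-t}$ as $t\to T$: although $p_0$ has compact support and $p_{1-t}$ is a smooth positive density for $t<T$ by OU smoothing, one must verify that $\hat{\bfy}$ is genuinely defined up to $t=T$ so that the Girsanov step can be carried out on the full interval. Importantly, Girsanov itself requires no control on the unperturbed drift—only on the bounded perturbation $\tilde b$—so the difficulty reduces to existence and weak uniqueness of $\hat{\bfy}$ and $\bfy^\beta$ on $[0,T]$. Under \textbf{A\ref{assumption:manifold}} this is standard and can be obtained either directly from the time-reversal theory of \cite{cattiaux2023time}, or by running the Girsanov argument on $[0,T-\varepsilon]$ and letting $\varepsilon\to 0$ using pathwise continuity at $t=T$.
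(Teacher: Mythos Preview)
Your Girsanov argument is correct and complete up to the well-posedness caveat you flag. The paper's own proof is a one-line citation: it observes that the guidance term $-\beta\nabla h$ is a bounded perturbation of the true score (since $h$ is Lipschitz under \textbf{A\ref{assumption:manifold}}), checks that this fits the hypotheses of \cite{pidstrigach2022score}, and invokes Theorem~2 there directly. What you have written is essentially the mechanism inside that cited result made explicit: Pidstrigach's proof that a boundedly-perturbed reverse SDE lands on $\supp(p_0)$ proceeds exactly by a Girsanov change of measure with the bounded perturbation playing the role of your $\varphi$, yielding equivalence of path laws and hence of terminal marginals. So the two approaches are the same idea at different levels of abstraction---the paper black-boxes it, you unpack it.

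The trade-off is as you anticipate. Your version is self-contained and makes transparent why only boundedness of $\nabla h$ matters, but the ``hard part'' you identify---existence, weak uniqueness, and continuity of the reverse process up to and including $t=T$ when the score blows up---is precisely the technical content of \cite{pidstrigach2022score} that the paper is citing. The $[0,T-\varepsilon]$ truncation plus pathwise continuity route you sketch works, but establishing that pathwise limit exists and lies in $\supp(p_0)$ for the \emph{unperturbed} process $\hat{\bfy}$ is itself the substance of Pidstrigach's Theorem~2 (the time-reversal theory of \cite{cattiaux2023time} alone does not handle compactly supported or singular $p_0$). So in the end you need the same external input; you just use it for well-posedness rather than for the full conclusion.
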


\begin{proof}
    This directly applies to the results of \citep{pidstrigach2022score}. First, we have that \citep[Assumption 1, Assumption 2]{pidstrigach2022score} are satisfied using A\ref{assumption:manifold}, \citep[Lemma 1]{pidstrigach2022score} and the second part of \citep[Theorem 2]{pidstrigach2022score}. We conclude using the first part of \citep[Theorem 2]{pidstrigach2022score}.
\end{proof}

In \Cref{prop:concentration_manifold}, we show that the guided reconstructed scheme used for warm-up \eqref{eq:warm-up-process} cannot discover minimizers outside the support of $p_0$. In \Cref{prop:concentration}, we will show that we concentrate on the minimizers inside the support of $p_0$ under additional assumptions.

Next, we make the following assumption, which is mostly technical. We denote $q_t^\beta$ the distribution of $\bfy_t^\beta$ for any $t \in [0,1]$. We also denote $(p_{1-t}^\beta)_{t \in [0,1]} = (q_t^\beta)_{t \in [0,1]}$.

\begin{assumption}
    \label{assumption:score_control}
We have that $h \in \rmc^\infty(\rset^d, \rset)$. In addition, $C > 0$ exists such that for any $x \in \rset^d$, we have a.s. 
\begin{equation}
\label{eq:control_w}
    \textstyle \abs{\int_0^1 \beta_t(W_t^\beta(\bfz_t) - W_0^\beta(\bfz_0)) \rmd t } \leq C ,
\end{equation}
with $\rmd \bfz_t = \{f_t - g_t^2 \nabla \log p_t \rangle \rmd t + \rmd \bfw_t$, $\bfz_0 = x$, $W_t = \langle \nabla \log p_t^\beta, \nabla h \rangle + \Delta h$ and $\beta_t \equiv \beta(1-t)$. For $t \in [0,1]$ and $x \in \rset^d$
\begin{equation}
V_t = \mathrm{div}(f_{t} - g_{t}^2 \nabla \log p_{t}) , \qquad W_t^\beta = \langle \nabla h, \nabla \log p_t^\beta \rangle + \Delta h . 
    \end{equation}
    Assume that $V$ and $W^\beta$ are continuous and bounded on $[0,1] \times \rset^d$. Assume that $(p_t)_{t \in [0,1]}$ and $(p_t^\beta)_{t \in [0,1]}$ are strong solutions to their associated Fokker-Planck equations.
\end{assumption}

We do not claim that we verify this hypothesis in this paper. Proving A\ref{assumption:score_control} is out of the scope of this work, and we mainly use it to 1) control high-order terms and 2) provide sufficient conditions to apply the Feynman-Kac theorem \citep[Theorem 7.13]{oksendal2013stochastic} and the Fokker-Planck equation. The bound in \eqref{eq:control_w} controls the regularity of the $W_t^\beta(\bfz_t)$. Given that (under some mild regularity assumption), $(W_t^\beta(\bfz_t))_{t  \in [0,1]}$ satisfies a Stochastic Differential Equation we expect that $\mathbb{E}[\| (W_t^\beta(\bfz_t) - W_0^\beta(\bfz_0)) \|] \leq C \sqrt{t}$ for any $t \in [0,1]$ and some constant $C > 0$ (independent of $t$). Therefore, we get that \eqref{eq:control_w} is true in expectation under some regularity assumption ( a. $\Delta h$  is Lipschitz. bThe diffused distribution $p_t$ 
 is smooth with bounded derivatives.). We conjecture that the almost sure bound we require is unnecessary and that moment bounds should be enough. We leave this study for future work.

\begin{proposition}
\label{prop:control_warm_up}
    Assume \textup{\textbf{A\ref{assumption:score_control}}}. For any $x \in \rset^d$, let $\tilde{p}_0^\beta(x)$ be given by 
    \begin{equation}
        \tilde{p}_0^\beta(x) = p_0(x) \exp[\log(\beta_0)\{ \Delta h(x) + \langle \nabla \log p_0^\beta(x), \nabla h(x) \rangle \}] ,
    \end{equation}
    where $p_0^\beta$ is the distribution of $\bfy_T^\beta$ and $\beta_0$ is the inverse temperature at the end of the process.  
    
    Then there exists $C_0 >0$ such that for any $x \in \rset^d$
    \begin{equation}
        (1/C_0) \tilde{p}_0^\beta(x) \leq p_0^\beta(x) \leq C_0 \tilde{p}_0^\beta(x) .
    \end{equation}
\end{proposition}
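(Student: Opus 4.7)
The plan is to represent the ratio $r_s(x) = p_s^\beta(x)/p_s(x)$ by a Feynman-Kac formula along an auxiliary diffusion, then use the specific schedule $\beta(s) = 1/s$ on $(1/\beta_0,1]$ together with the almost-sure bound \eqref{eq:control_w} from \textup{\textbf{A\ref{assumption:score_control}}} to extract the factor $\exp[\log(\beta_0)\,W_0^\beta(x)]$.

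First I would work in the ``forward'' time variable $s = 1-t$, write the Fokker-Planck equation for the guided density $p_s^\beta$, and the Fokker-Planck equation for the unguided marginal $p_s$. A direct computation for $r_s = p_s^\beta/p_s$---expanding the derivatives of the product and using $\Delta h + \langle \nabla h, \nabla \log p_s\rangle = W_s^\beta - \langle \nabla h, \nabla \log r_s\rangle$---shows that the terms containing $\nabla \log p_s$ cancel and leave the \emph{linear} terminal-value problem
\begin{equation*}
\partial_s r_s + \langle f_s, \nabla r_s\rangle + \tfrac{1}{2} g_s^2 \Delta r_s + \beta(s) W_s^\beta r_s = 0 , \qquad r_1 \equiv 1 ,
\end{equation*}
where the terminal condition uses the fact that both reverse processes are initialized from the same distribution $\bfx_T$.

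Next I would invoke Feynman-Kac (equivalently, the Hopf-Cole transform of the nonlinear equation for $\log r_s$), whose hypotheses are supplied by the continuity/boundedness of $V$ and $W^\beta$ and the strong-solution property in \textup{\textbf{A\ref{assumption:score_control}}}. This produces
\begin{equation*}
r_0(x) = \mathbb{E}\Bigl[\exp\Bigl(\int_0^1 \beta(s)\,W_s^\beta(\bfz_s)\,\rmd s\Bigr) \,\Big|\, \bfz_0 = x\Bigr] ,
\end{equation*}
where $\bfz_s$ is the auxiliary diffusion associated with the principal part of the PDE (the process used in \eqref{eq:control_w}, up to a conventional change of variable). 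Using $\beta(s) = 1/s$ on $(1/\beta_0,1]$ and $0$ otherwise, I split
\begin{equation*}
\int_0^1 \beta(s)\,W_s^\beta(\bfz_s)\,\rmd s = W_0^\beta(x)\log(\beta_0) + \int_{1/\beta_0}^1 \tfrac{1}{s}\bigl(W_s^\beta(\bfz_s) - W_0^\beta(\bfz_0)\bigr)\,\rmd s ,
\end{equation*}
using $\int_{1/\beta_0}^1 \rmd s/s = \log(\beta_0)$ and \eqref{eq:control_w} to bound the remainder a.s.\ by a universal constant $C$. Exponentiating, taking the expectation, and multiplying by $p_0(x)$ yields $e^{-C}\tilde{p}_0^\beta(x) \leq p_0^\beta(x) \leq e^C \tilde{p}_0^\beta(x)$, i.e.\ the claim with $C_0 = e^C$.

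The main obstacle is the clean justification of the Feynman-Kac step: $W_s^\beta$ depends implicitly on $r_s$ through $\nabla \log p_s^\beta = \nabla \log p_s + \nabla \log r_s$, so linearity holds only when $W_s^\beta$ is treated as a prescribed bounded continuous coefficient along paths. That this is legitimate is precisely the content of the boundedness clauses in \textup{\textbf{A\ref{assumption:score_control}}}. A secondary point is any small mismatch between the initializations of the guided and unguided reverse processes at $s=1$; such a discrepancy only inflates the constant $C_0$ and does not affect the structural estimate.
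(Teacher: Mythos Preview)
Your route is correct and is genuinely different from the paper's. The paper does \emph{not} pass to the ratio $r_t=p_t^\beta/p_t$. Instead it rewrites the Fokker--Planck equations for $p_t$ and for $p_t^\beta$ separately in the common form $\partial_t u+\langle \mu_t,\nabla u\rangle+\tfrac12 g_t^2\Delta u+V_t u\,(+\,\beta_t W_t^\beta u)=0$ with $\mu_t=f_t-g_t^2\nabla\log p_t$ and $V_t=\mathrm{div}(\mu_t)$, and then applies Feynman--Kac \emph{twice} with the same auxiliary diffusion $\rmd\bfz_t=\mu_t\,\rmd t+g_t\,\rmd\bfw_t$, obtaining
\[
p_0(x)=\mathbb{E}\bigl[e^{\int_0^1 V_t(\bfz_t)\,\rmd t}\,p_T(\bfz_T)\bigr],\qquad
p_0^\beta(x)=\mathbb{E}\bigl[e^{\int_0^1 V_t(\bfz_t)\,\rmd t}\,e^{\int_0^1\beta_t W_t^\beta(\bfz_t)\,\rmd t}\,p_T(\bfz_T)\bigr].
\]
The a.s.\ bound \eqref{eq:control_w} lets one pull $\exp[\log(\beta_0)W_0^\beta(x)\pm C]$ out of the second expectation, and what remains is exactly the first expectation, i.e.\ $p_0(x)$. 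Your argument is sleeker: deriving a single linear PDE for $r_t$ eliminates both the potential $V_t$ and the terminal weight $p_T(\bfz_T)$, leaving a one-line Feynman--Kac with $r_1\equiv 1$.

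There is one real discrepancy you should flag rather than paper over. The principal part of your PDE for $r_t$ has drift $f_t$ (so your Feynman--Kac process is the \emph{forward} noising SDE $\rmd\bfz_t=f_t\,\rmd t+g_t\,\rmd\bfw_t$), whereas the bound \eqref{eq:control_w} in \textup{\textbf{A\ref{assumption:score_control}}} is stated for the process with drift $\mu_t=f_t-g_t^2\nabla\log p_t$. These differ by $g_t^2\nabla\log p_t$ and are not related by ``a conventional change of variable''; your proof therefore cannot invoke \eqref{eq:control_w} as written. Since \textup{\textbf{A\ref{assumption:score_control}}} is an unverified technical hypothesis anyway, the fix is trivial---either restate the a.s.\ bound for the forward process, or revert to the paper's two-FK approach whose auxiliary diffusion matches the assumption exactly---but the parenthetical should be dropped.
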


%Before diving into the proof, let us provide some insight on \cref{prop:control_warm_up}. 

%The distribution of $p_0^\beta$ is the distribution of interest as it is the output distribution of the guided warm-up process. 
%We consider a specific schedule $\beta(t) = 1/t$ with $t > t_0$ and $0$ otherwise to obtain meaningful bounds. This means that as $\beta_0 \to \infty$, we are increasingly emphasizing $t \approx 0$. 

\begin{proof}
    First, for any $t \in [0,1]$ we denote $p_t$ the density of $\bfx_t$ where $(\bfx_t)_{t \in [0,1]}$ is given by \eqref{eq:forward_process}. Similarly, we denote $q_t^\beta$ the distribution of $\bfy_t^\beta$ for any $\tau_0 > 0$ and $t \in [0,1]$. Finally, we denote $(p_{1-t}^\beta)_{t \in [0,1]} = (q_t^\beta)_{t \in [0,1]}$ for any $\tau_0 >0$. In what follows, we fix $t_0 > 0$. Using A\ref{assumption:score_control} we have that for any $t \in [0,1]$ 
    \begin{equation}
        \partial_t p_t = -\mathrm{div}(f_t p_t) + (g_t^2 /2) \Delta p_t .
    \end{equation}
    Therefore, we have that for any $t \in [0,1]$
    \begin{equation}
        \partial_t p_t + \mathrm{div}(f_t p_t) - (g_t^2 /2) \Delta p_t = 0.
    \end{equation}
    This can also be rewritten as 
    \begin{equation}
    \label{eq:fokker_planck}
        \partial_t p_t + \langle f_t - g_t^2 \nabla \log p_t, \nabla p_t \rangle + (g_t^2 /2) \Delta p_t + \mathrm{div}(f_t -g_t^2 \nabla \log p_t) p_t = 0 , 
    \end{equation}
    where we have used that $\mathrm{div}(\nabla \log p_t p_t) = \Delta p_t$.
    Similarly, we have that 
    \begin{equation}
        \partial_t q_t^\beta = -\mathrm{div}(\{-f_{1-t} + g_{1-t}^2 \nabla \log p_{1-t} - \beta_{1-t} \nabla h\}q_t^\beta) + (g_{1-t}^2 /2) \Delta q_t^\beta .
    \end{equation}
    We have that 
    \begin{equation}
        \partial_t p_t^\beta = \mathrm{div}(\{-f_{t} + g_{t}^2 \nabla \log p_{t} - \beta_{t}\nabla h\}p_t^\beta) - (g_{t}^2 /2) \Delta p_t^\beta .
    \end{equation}
    This can also be rewritten as 
    \begin{equation}
    \label{eq:foker_planck_tau}
        \partial_t p_t^\beta + \langle f_t - g_t^2 \nabla \log p_t^\beta, \nabla p_t^\beta \rangle + \mathrm{div}(f_{t} - g_{t}^2 \nabla \log p_{t}) p_t^\beta + (g_{t}^2 /2) \Delta p_t^\beta + \mathrm{div}(\beta_t \nabla hp_t^\beta) = 0 .
    \end{equation}
    Finally, this can be rewritten as 
    \begin{equation}
    \label{eq:}
        \partial_t p_t^\beta + \langle f_t - g_t^2 \nabla \log p_t, \nabla p_t^\beta \rangle  + (g_{t}^2 /2) \Delta p_t^\beta + \mathrm{div}(f_{t} - g_{t}^2 \nabla \log p_{t}) p_t^\beta + 
        \beta_t\{ \langle \nabla h, \nabla \log p_t^\beta \rangle + \Delta h\} p_t^\beta  = 0.
    \end{equation}
    In what follows, we denote 
    \begin{equation}
    \label{eq:notation_feynman_kac}
        \mu_t = f_t - g_t^2 \nabla \log p_t , \qquad V_t = \mathrm{div}(f_{t} - g_{t}^2 \nabla \log p_{t}) , \qquad W_t^\beta = \langle \nabla h, \nabla \log p_t^\beta \rangle + \Delta h . 
    \end{equation}
    Hence, using \eqref{eq:foker_planck_tau} we have 
    \begin{equation}
    \label{eq:fokker_planck_tau_revisited}
        \partial p_t^\beta + \langle \mu_t, \nabla \log p_t^\beta \rangle + (g_t^2 /2) \Delta p_t^\beta + V_t p_t^\beta + \beta_t W_t^\beta p_t^\beta = 0 . 
    \end{equation}
    Similarly, using \eqref{eq:fokker_planck} we have 
   \begin{equation}
   \label{eq:fokker_planck_revisited}
        \partial p_t + \langle \mu_t, \nabla \log p_t \rangle + (g_t^2 /2) \Delta p_t + V_t p_t = 0 . 
    \end{equation}
    Therefore, combining \eqref{eq:fokker_planck_revisited}, A\ref{assumption:score_control} and \citep[Theorem 7.13]{oksendal2013stochastic} we get that for any $x \in \rset^d$ 
    \begin{equation}
       \textstyle  p_0(x) = \mathbb{E}[ \exp[\int_0^1 V_t(\bfz_t) \rmd t ] p_T(\bfz_T) \ | \ \bfz_0 = x] ,
    \end{equation}
    with $\rmd \bfz_t = \mu_t \rmd t + g_t \rmd \bfw_t$ and $\bfz_0 = x$. 
    Similarly, combining \eqref{eq:fokker_planck_tau_revisited}, A\ref{assumption:score_control} and \citep[Theorem 7.13]{oksendal2013stochastic} we get that for any $x \in \rset^d$ 
    \begin{equation}
    \label{eq:feynman-kac-tau}
       \textstyle  p_0^\beta(x) = \mathbb{E}[ \exp[\int_0^1 V_t(\bfz_t) \rmd t] \exp[ \int_0^1 \beta_t W_t^\beta(\bfz_t) \rmd t ] p_T(\bfz_T) \ | \ \bfz_0 = x] .
    \end{equation}
    Using \eqref{eq:notation_feynman_kac}, we have that 
    \begin{equation}
       \textstyle \int_0^1 \beta_t W_t^\beta(\bfz_t) \rmd t = \int_{0}^1 \beta_t \{ \langle \nabla h, \nabla \log p_t^\beta \rangle + \Delta h \}(\bfz_t) \rmd t .
    \end{equation}
    Hence, we have
   \begin{equation}
       \textstyle \int_0^1 \beta_t 
    W_t^\beta(\bfz_t) \rmd t = \log(\beta_0) \{ \langle \nabla h, \nabla \log p_0^\beta \rangle + \Delta h \}(\bfz_0) + \int_{0}^1\beta_t(W_t^\beta(\bfz_t) - W_0^\beta(\bfz_0)) \rmd t .
    \end{equation}
    Using A\ref{assumption:score_control} we have that 
    \begin{equation}
        -C \leq  \textstyle \int_0^1 \beta_tW_t^\beta(\bfz_t) \rmd t - \log(\beta_0) \{ \langle \nabla h, \nabla \log p_0^\beta \rangle + \Delta h \}(\bfz_0) \leq C\,.
    \end{equation}
    Hence,
    \begin{equation}
        -C + \log(\beta_0) \{ \langle \nabla h, \nabla \log p_0^\beta \rangle + \Delta h \}(\bfz_0) \leq \textstyle \int_0^1 \beta_tW_t^\beta(\bfz_t) \rmd t \leq C + \log(\beta_0) \{ \langle \nabla h, \nabla \log p_0^\beta \rangle + \Delta h \}(\bfz_0)\,.
    \end{equation}
    Combining this result with \eqref{eq:feynman-kac-tau} we get that for any $x \in \rset^d$
    \begin{align}
        \textstyle  p_0^\beta(x) &\leq \textstyle \mathbb{E}[ \exp[\int_0^1 V_t(\bfz_t) \rmd t]  p_T(\bfz_T) \ | \ \bfz_0 = x] \exp[\log(\beta_0) \{ \langle \nabla h, \nabla \log p_0^\beta + \Delta h \rangle\}(x)] \exp[C] \\
        &= p_0(x) \exp[\log(\beta_0) \{ \langle \nabla h, \nabla \log p_0^\beta + \Delta h \rangle\}(x)] \exp[C] . 
    \end{align}
    Similarly, we have for any $x \in \rset^d$
    \begin{equation}
        \textstyle  p_0^\beta(x) \geq p_0(x) \exp[\log(\beta_0) \{ \langle \nabla h, \nabla \log p_0^\beta \rangle + \Delta h\}(x)] \exp[-C] ,
    \end{equation}
    which concludes the proof.
\end{proof}

In \Cref{prop:control_warm_up}, we show that the output distribution of the warm-up process is upper and lower bounded by a product of experts comprised of 
\begin{enumerate*}[label=(\roman*)]
    \item $p_0$ which ensures the \emph{feasibility} conditions
    \item $\exp[\log(\beta_0) W_0^\beta]$ related to the \emph{optimization} of the objective. 
\end{enumerate*}
While \Cref{prop:control_warm_up} gives an explicit form for $p_0^\beta$, it does not provide insights on the properties of this distribution. However, we can still infer some limiting properties.

\begin{proposition}
\label{prop:concentration}
    If $x^\star$ is a local strict minimizer of $h$ in the support of $p_0$ then $\lim_{\beta_0 \to \infty} p_0^\beta(x^\star) = +\infty$. If $x^\star$ is a local strict minimizer of $h$ not in the support of $p_0$ then $\lim_{\beta_0 \to \infty} p_0^\beta(x^\star) = 0$. 
\end{proposition}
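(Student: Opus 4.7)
The plan is to deduce \Cref{prop:concentration} directly from the two-sided bound established in \Cref{prop:control_warm_up}. That bound sandwiches $p_0^\beta(x)$ between $(1/C_0)\tilde p_0^\beta(x)$ and $C_0 \tilde p_0^\beta(x)$, so the limiting behaviour of $p_0^\beta(x^\star)$ is entirely controlled by the behaviour of
$$\tilde p_0^\beta(x^\star) = p_0(x^\star) \exp\bigl[\log(\beta_0)\, W_0^\beta(x^\star)\bigr], \qquad W_0^\beta(x) = \Delta h(x) + \langle \nabla \log p_0^\beta(x), \nabla h(x)\rangle,$$
as $\beta_0 \to \infty$. Hence the entire argument localises at the single point $x^\star$ and reduces to inspecting two scalar quantities: $p_0(x^\star)$ and $W_0^\beta(x^\star)$.

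The case $x^\star \notin \supp(p_0)$ is immediate. Here $p_0(x^\star) = 0$, so $\tilde p_0^\beta(x^\star) = 0$ irrespective of the exponential factor, and the upper bound $p_0^\beta(x^\star) \leq C_0\,\tilde p_0^\beta(x^\star)$ forces $p_0^\beta(x^\star) = 0$ for every $\beta_0$, which in particular yields the limit $0$.

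For $x^\star \in \supp(p_0)$ a (non-degenerate) strict local minimiser, I would first record the two standard first- and second-order facts: $\nabla h(x^\star) = 0$ and $\nabla^2 h(x^\star) \succ 0$, the latter giving $\Delta h(x^\star) = \mathrm{tr}(\nabla^2 h(x^\star)) > 0$. Because $\nabla h(x^\star) = 0$, the inner-product term in $W_0^\beta(x^\star)$ vanishes (the score $\nabla \log p_0^\beta(x^\star)$ being finite by the regularity bundled into \textup{\textbf{A\ref{assumption:score_control}}}, since $x^\star$ lies in the interior of $\supp(p_0)$), so $W_0^\beta(x^\star) = \Delta h(x^\star)$ uniformly in $\beta_0$. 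Consequently
$$\tilde p_0^\beta(x^\star) = p_0(x^\star)\, \beta_0^{\Delta h(x^\star)} \longrightarrow +\infty \quad \text{as } \beta_0 \to \infty,$$
and the lower bound $p_0^\beta(x^\star) \geq (1/C_0)\,\tilde p_0^\beta(x^\star)$ delivers the claimed divergence.

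The main obstacle I foresee is the justification that $\langle \nabla \log p_0^\beta(x^\star), \nabla h(x^\star)\rangle$ truly vanishes and does not contribute a nontrivial factor at the limit: this requires $\nabla \log p_0^\beta(x^\star)$ to be finite uniformly in $\beta_0$, which follows from the regularity assumptions on $p_0^\beta$ in \textup{\textbf{A\ref{assumption:score_control}}} combined with $x^\star$ being interior to the support. A secondary subtlety is the interpretation of ``strict local minimiser'': if one allows degenerate minima (e.g.\ $h(x) = x^4$ at $0$) then $\Delta h(x^\star)$ could vanish and a higher-order Laplace-type expansion of $W_0^\beta$ near $x^\star$ would be required; working under the non-degeneracy convention $\nabla^2 h(x^\star) \succ 0$ keeps the argument clean and is consistent with the $\rmc^3$ regularity used in \Cref{prop:laplace_minimizers}.
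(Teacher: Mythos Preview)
Your proposal is correct and follows essentially the same approach as the paper: invoke the sandwich bound of \Cref{prop:control_warm_up}, dispose of the case $x^\star \notin \supp(p_0)$ via $p_0(x^\star)=0$, and for $x^\star \in \supp(p_0)$ use $\nabla h(x^\star)=0$ together with $\Delta h(x^\star)>0$ to force $\tilde p_0^\beta(x^\star)\to+\infty$. The paper's own proof is a terse two-sentence version of exactly this argument, and your added remarks on the finiteness of $\nabla\log p_0^\beta(x^\star)$ and the non-degeneracy convention make explicit points the paper leaves implicit.
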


\begin{proof}
The case where $x^\star$ is not in support of $p_0$ is trivial using \Cref{prop:control_warm_up}. We now assume that $x^\star$ is in the support of $p_0$. 
    Using \Cref{prop:control_warm_up} and that $\nabla h(x^\star) = 0$ and $\Delta h(x^\star) > 0$ since the local minimizer $x^\star$ is strict, we get that $\lim_{\beta_0 \to \infty} p_0^\beta(x^\star) = +\infty$, which concludes the proof.
\end{proof}

In particular, \Cref{prop:concentration} shows that the limit distribution of $\bfy_T^\beta$ concentrates around the minimizers of $h$, which is the expected behavior of increasing the inverse temperature. What is also interesting is that we only target the minimizers of $h$, which are inside the support of $p_0$.  This is our primary goal, which is \emph{constrained} optimization of $h$.

\section{Experiment Details}
\label{appendix:exp}

\subsection{Computing Infrastructure}
\label{appendix:computing}
System: Ubuntu 18.04.6 LTS; Python 3.9; Pytorch
1.11. CPU: Intel(R) Xeon(R) Silver 4214 CPU @ 2.20GHz. GPU: GeForce GTX 2080 Ti.

\subsection{Synthetic Branin Experiment}

\begin{wrapfigure}{r}{0.3\textwidth}
  \begin{center}  \includegraphics[width=0.3\textwidth]{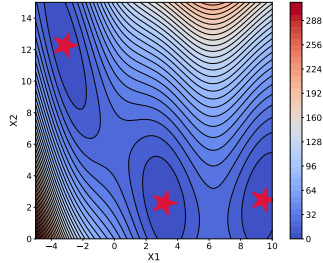}
  \end{center}
  %\vspace{-1.5em}
  \caption{Branin function.}
  \label{fig:branin}
  %\vspace{-1.2em}
\end{wrapfigure}
We consider the commonly used Branin function as a synthetic toy example that takes the following form (illustrated in Figure~\ref{fig:branin}):

\begin{equation}
    f(x_1,x_2)=a(x_2-bx_1^2+cx_1-r)^2+s(1-t)\text{cos}(x_1)+s , 
\end{equation}
where $a=1$, $b=\frac{5.1}{4\pi^2}$, $c=\frac{5}{\pi}$, $r=6$, $s=10$, $t=\frac{1}{8\pi}$.

The Branin function $f(x_1,x_2)$ has three global minimas located at points $(-\pi,12.275)$, $(\pi,2.275)$, and $(9.42478,2.475)$  with  a value of $0.397887$.

\textbf{Dataset details.} We curate $6,000$ data points by sampling uniformly in an ellipse region with center $(-0.2, 7.5)$ and semi-axis lengths $(3.6, 8.0)$ as training data (the blue region in \cref{fig:landscape_unknown}). It is tilted counterclockwise by 25 degrees, ensuring that it covers two minimizers of the function: $(-\pi,12.275)$ and $(\pi,2.275)$. It is worth noting that sampling points $(x_1,x_2)$ to construct the training dataset are irrelevant to the objective value $f(x_1,x_2)$.  For the experiment with additional known constraints, we introduce two constraints $x_2 \leq \frac{3}{2} x_1 +\frac{15}{2}$ and $x_2 \leq -\frac{3}{2} x_1 +15$ (the pink region in \cref{fig:landscape_unknown}) to further narrow down the feasible solution to $(\pi,2.275)$. We split the dataset into training and validation sets by 9:1. 

\textbf{Implementation details.}
We build the score network $s_\theta$ of the diffusion model with a 2-layer MLP architecture with 1024 hidden dimensions and ReLU activation function. The forward process is a Variance Preserving (VP) SDE \citep{song2020score}.
We set the minimum and maximum values of noise variance to be $0.01$ and $2.0$, respectively. We employ a fixed learning rate of $0.001$, a batch size of $128$, and $1000$ epochs for model training. At test time, we sample $500$ candidate solutions. We use a constant inverse temperature $\beta=5$ for the Boltzmann distribution induced by the objective function. For the distribution induced by the additional known constraints, we set $\beta'=10.$ 

\subsection{Offline Black-box Optimization}

\textbf{Dataset details.} DesignBench \citep{designbench} is an offline black-box optimization benchmark for real-world optimization tasks. Following \citep{DDOM}, we use three continuous tasks: Superconductor, D'Kitty Morphology and Ant Morphology, and three discrete tasks: TFBind8, TFBind10, and ChEMBL.
Consistent with \citep{DDOM}, we exclude NAS due to its significant computational resource demands. We also exclude Hopper as it is known to be buggy (see Appendix C in \citep{DDOM}). We split the dataset into training and validation sets by 9:1. 

\begin{itemize}
\item Superconductor: materials optimization. This task aims to search for materials with high critical temperatures. The dataset contains 17,014 vectors with 86 components representing the number of atoms of each chemical element in the formula. The provided oracle function is a pre-trained random forest regression model.
\item D'Kitty Morphology: robot morphology optimization. This task aims to optimize the parameters of a D'Kitty robot, such as size, orientation, and location of the limbs, to make it suitable for a specific navigation task. The dataset size is 10,004, and the parameter dimension is 56. It uses MuJoCO \citep{mujoco}, a robot simulator, as the oracle function.
\item Ant Morphology: robot morphology optimization. Similar to D'Kitty, this task aims to optimize the parameters of a quadruped robot to move as fast as possible. It consists of 10,004 data, and the parameter dimension is 60. It also uses MuJoCo as the oracle function.
\item TFBind8: DNA sequence optimization. This task aims to find the DNA sequence of length eight with the maximum binding affinity with transcription factor SIX6\_REF\_R1. The design space is the space of sequences of nucleotides represented as categorical variables. The size of the dataset is 32,898, with a dimension of 8. The ground truth serves as a direct oracle since the affinity for the entire design space is available.
\item TFBind10: DNA sequence optimization. Similar to TFBind8, this task aims to find the DNA sequence of length ten that has the maximum binding affinity with transcription factor SIX6\_REF\_R1. The design space consists of all possible designs of nucleotides. The size of the dataset is 10,000, with a dimension of 10. Since the affinity for the entire design space is available, it uses the ground truth as a direct oracle.
\item ChEMBL: molecule activity optimization. This task aims to find molecules with a high MCHC value when paired with assay CHEMBL3885882. The dataset consists of 441 samples of dimension 31. 
\end{itemize}

\textbf{Baselines.} We compare with eight baselines on DesignBench tasks. The results of all the baselines are from \citep{DDOM}. \textbf{Gradient ascent} learns a surrogate model of the objective function and generates the optimal solution by iteratively performing gradient ascent on the surrogate model. \textbf{CbAS} learns a density model in the design space coupled with a surrogate model of the objective function. It iteratively generates samples and refines the density model on the new samples during training.  \textbf{GP-qEI} fits a Gaussian Process on the offline dataset. It employs the quasi-Expected-Inprovement (qEI) acquisition function from Botorch \citep{botorch} for Bayesian optimization. \textbf{MINS} learns an inverse map from objective value back to design space using a Generative Adversarial Network (GAN). It then obtains optimal solutions through conditional generation. \textbf{REINFORCE} parameterizes a distribution over the design space and adjusts this distribution in a  direction that maximizes the efficacy of the surrogate model. \textbf{COMS} learns a conservative 
surrogate model by regularizing the adversarial samples.
It then utilizes gradient ascent to discover the optimal solution. \textbf{CMAES} enhances a distribution over the optimal design by adapting the covariance matrix according to the highest-scoring samples selected by the surrogate model. \textbf{DDOM} learns a conditional diffusion model to learn an inverse mapping from the objective value to the input space.

\textbf{Implementation details.} We build the score network $s_\theta$ using a simple feed-forward network. This network consists of two hidden layers, each with a width of 1024 units, and employs ReLU as the activation function. The forward process is a  Variance Preserving (VP) SDE \citep{song2020score}. We set the noise variance limits to a minimum of 0.01 and a maximum of 2.0. 

For the surrogate models, we explore various network architectures tailored to different datasets, including Long short-term memory (LSTM) \citep{hochreiter1997long}, Gaussian Fourier Network, and Deep Kernel Learning (DKL) \citep{wilson2016deep, wilson2016stochastic}. LSTM network uses a single-layer LSTM unit with a hidden dimension of 1024, followed by 1 hidden layer with a dimension of 1024, utilizing ReLU as the activation function. In the Gaussian Fourier regressor, Gaussian Fourier embeddings \citep{tancik2020fourier} are applied to the inputs $x$ and $t$. These embeddings are then processed through a feed-forward network with 3 hidden layers, each of 1024 width, utilizing Tanh as the activation function. This regressor is time-dependent, and its training objective follows the method used by \citep{song2020score} for training time-dependent classifiers in conditional generation. For DKL,  we use the ApproximateGP class in Gpytorch\footnote{\url{https://docs.gpytorch.ai/en/stable/_modules/gpytorch/models/approximate_gp.html\#ApproximateGP}}, which consists of a deep feature extractor and a Gaussian process (GP). The feature extractor is a simple feed-forward network consisting of $2$ hidden layers with a width of $500$ and $50$, respectively, and ReLU activations. The GP uses
radial basis function (RBF) kernel.

We use a fixed learning rate of 0.001 and a batch size of 128 for both the diffusion and surrogate models. During testing, we follow the evaluation protocol from the \citep{DDOM}, sampling 256 candidate solutions. We apply different annealing strategies for different datasets. Specifically, we apply exponential annealing for TFBind8, superconductor, D'Kitty, and ChEMBL. The exponential annealing strategy is defined as $\beta(\tau)= \beta_{\rm max}[1-\exp(-100(T-\tau))]$, where $\tau=T-t$, and a constant $\beta$ for Ant and TFBind10. Though exponential annealing usually exhibits better performance, we leave the exploration of exponential annealing on TFBind10 and D'Kitty for future work due to time limit. The step size $\Delta t$ is $0.001$ for the first stage, and $0.0001$ for the second stage.

Detailed hyperparameters and network architectures for each dataset are provided in \cref{appendix:hyperparameter}.

\begin{table}[h]
\centering
\begin{tabular}{@{}lccc@{}}
\toprule
               & Annealing strategy & $\beta_{\rm max}$ & Surrogate model\\ \midrule
TFBind8        &  Exponential                  & $200$    &  Gaussian Fourier\\
TFBind10       &   Constant                 &  $20$  & Gaussian Fourier \\
Superconductor &    Exponential                &  $100$  & Gaussian Fourier  \\
Ant            &  Exponential   &  30       & Gaussian Fourier   \\
D'Kitty        &   Constant                 & $3e4$  & DKL  \\
ChEMBL         &      Exponential               &  $100$  &  LSTM \\ \bottomrule
\end{tabular}
\caption{Implementation details on design-bench. $\beta_{\rm max}$ is the value of $\beta$ at the end of the annealing process.}
\label{appendix:hyperparameter}
\end{table}

\subsection{Multi-objective Molecule Optimization}

\textbf{Dataset details.} We curate the dataset by encoding 10000 molecules (randomly selected) from the ChEMBL dataset \citep{gaulton2012chembl} with HierVAE \citep{hiervae}, a commonly used molecule generative model based on VAE, which takes a hierarchical procedure in generating molecules by building blocks. Since validity is important for molecules, we ensure HierVAE can decode all the randomly selected encoded molecules. We split all the datasets into training and validation sets by 9:1.  

\textbf{Oracle details.} We evaluate three commonly used molecule optimization oracles including \textit{synthesis accessibility (SA)}, \textit{quantitative evaluation of drug-likeness (QED)} and activity again target \textit{GSK3B} from RDKit \citep{greg_landrum_2020_3732262} and TDC \citep{huang2021therapeutics}. All three oracles take as input a SMILES string representation of a molecule and return a scalar value of the property. The oracles are non-differentiable.

\textbf{Implementation details.}  We build the score network $s_\theta$ of the diffusion model using a 2-layer MLP architecture. This network features 1024 hidden dimensions and utilizes the ReLU activation function. The forward process adheres to a Variance Preserving (VP) SDE proposed by \citep{song2020score}. We calibrate the noise variance within this model, setting its minimum at $0.01$ and maximum at $2.0$.

For the surrogate model of the objective function, we use the ApproximateGP class in Gpytorch\footnote{\url{https://docs.gpytorch.ai/en/stable/_modules/gpytorch/models/approximate_gp.html\#ApproximateGP}}, which consists of a deep feature extractor and a Gaussian process. The feature extractor is a simple feed-forward network with two hidden layers, having widths of 500 and 50, respectively, and both employ ReLU activation functions. Regarding model optimization, we apply a fixed learning rate of $0.001$ for the diffusion model and $0.01$ for the surrogate model. Additionally, we set a batch size of 128 and conduct training over 1000 epochs for both models. For the sampling process, we use a consistent inverse temperature $\beta=10^4$ for all the three objectives. The step size $\Delta t$ is $0.001$ for the first stage, and $0.0001$ for the second stage.

 We sample $1000$ candidate solutions at test time for all the methods. For DDOM \citep{DDOM}, we use their implementation \footnote{\url{https://github.com/siddarthk97/ddom}}. For other baselines, we use the implementations provided by DesignBench\footnote{\url{https://github.com/brandontrabucco/design-bench}}. We tune the hyper-parameters of all the baselines as suggested in their papers.

For derivative-free optimization, the number of iterations is set to 100 for both the evolutionary algorithm and \ours. The number of particles for all methods remains the same as before, \ie~1000.

\begin{figure}[t!]
  \centering  
  \includegraphics[width=0.45\textwidth]{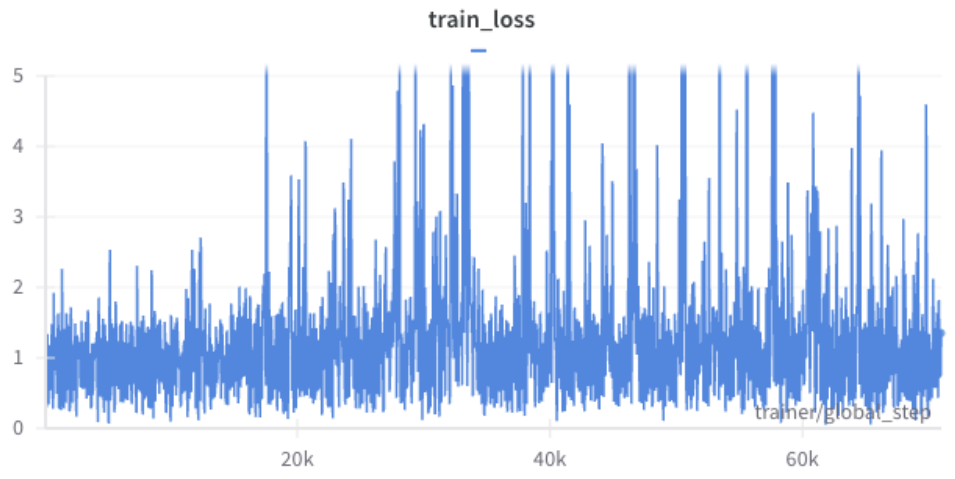}

  %\vspace{-1.5em}
  \caption{Training loss of the surrogate objective on Ant dataset}
  \label{fig:ant_train_loss}
  %\vspace{-1em}
\end{figure}

\section{Analysis on Ant Dataset}
\label{section:ant}

As shown in Table~\ref{tab:designbench}, \ours can only achieve subpar performance on the Ant dataset. This underperformance is primarily due to the difficulty in training the surrogate objective function. An illustration of this challenge is provided in Figure \ref{fig:ant_train_loss}, where the training loss of the surrogate objective is displayed. The training loss fluctuates throughout the training process. Although we investigated different network architectures, the issue still remains. Building an effective surrogate model for this dataset may require a more sophisticated architecture design.
We leave this for future work.

\section{Ablation Studies}
\label{sec:appendix:ablation}

\begin{table}[h]
\centering
% \begin{wraptable}{r}{0.55\textwidth}
\small
\resizebox{0.45\textwidth}{!}{

\begin{tabular}{@{}lcc@{}}
\toprule
                        & Superconductor & TFBind8 \\ \midrule
Best Baseline           &   $103.600\pm 8.139$             &    $0.981\pm 0.010$       \\
Only Stage I            &     $112.038\pm 6.783$           &  $0.984\pm 0.012 $         \\
Only Stage II           &    $92.432\pm 8.635$            &   $0.951\pm 0.028$        \\
Stage I + Stage II      &     $113.545\pm 5.322$           &         $0.987\pm 0.014$  \\
Stage I + Stage II + MH &         $\textbf{114.945}\pm 3.615$       &        $\textbf{0.989}\pm 0.021$   \\ \bottomrule
\end{tabular}}
\caption{Ablation study on the two-stage sampling.}
\label{table:ablation-two-stage}

% \end{wraptable}
\end{table}

\textbf{Impact of two-stage sampling.}
\cref{table:ablation-two-stage}  shows the impact of two-stage sampling on performance. Our findings reveal that even after the initial stage, \ours outperforms the top-performing baseline on both datasets. Relying solely on Langevin dynamics, without the warm-up phase of guided diffusion, results in significantly poorer results. This aligns with our discussion in \cref{sec:two-stage}, where we attributed this failure to factors such as the starting distribution, the schedule for step size adjustments, and the challenges posed by undefined gradients outside the feasible set. Integrating both stages yields a performance improvement as the initial stage can provide a better initialization within the data manifold  for the later stage (\cref{thm:concentration_warmup}). Adding the MH correction step further enhances results, leading to the best performance observed.

\textbf{Impact of annealing strategies.}  We study the influence of different annealing strategies for $\beta$ during the guided diffusion stage, focusing on the superconductor and TFBind8 datasets. We explore three strategies: constant, linear annealing, and exponential annealing. Figure~\ref{fig:annealing}(a) presents the performance across various diffusion steps. We find that our method is not particularly sensitive to the annealing strategies. However, it is worth noting that exponential annealing exhibits a marginal performance advantage over the others.

We also investigate how the value of $\beta$ at the end of annealing, denoted as $\beta_{\rm max}$, affects model performance in Figure~\ref{fig:annealing}(b). We find that increasing $\beta_{\rm max}$ initially leads to better performance. However, beyond a certain threshold, performance begins to decrease. It is noteworthy that the optimal value varies across different annealing strategies. Particularly, at $\beta_{\rm max} = 0$, the model reverts to a pure diffusion process, exhibiting the lowest performance due to the lack of guidance from the objective function.

\begin{figure}[t]
    \centering \includegraphics[width=0.95\textwidth]{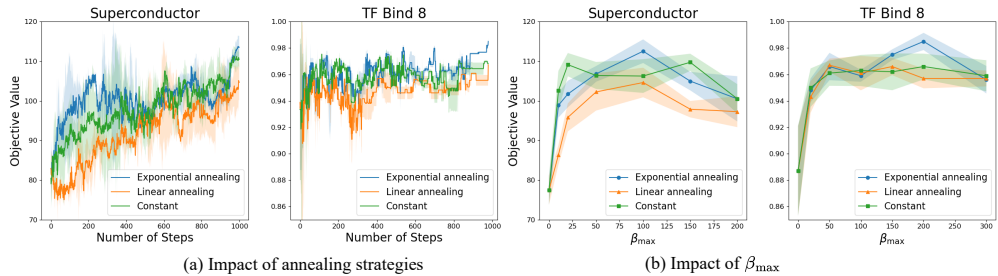}
    \caption{Impact of annealing strategies and $\beta_{\rm max}$ in the guided diffusion stage. $\beta_{\rm max}$ is the value of $\beta$ at the end of annealing.}
    %\vspace{-1.5em}
    \label{fig:annealing}
\end{figure}

\begin{figure}[t!]
   \centering

  %\vspace{cm}
  \includegraphics[width=0.95\textwidth]{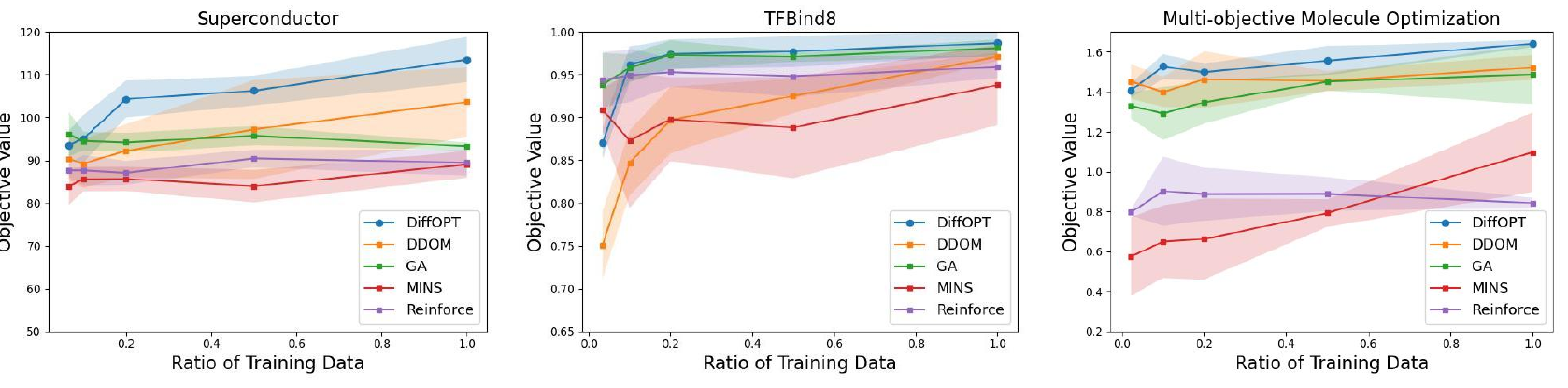}

  %\vspace{-1.5em}
  \caption{Impact of the number of training data.}
  \label{fig:training_data_size}
\end{figure}

\begin{figure}[t!]
   \centering

  %\vspace{cm}
  \includegraphics[width=0.95\textwidth]{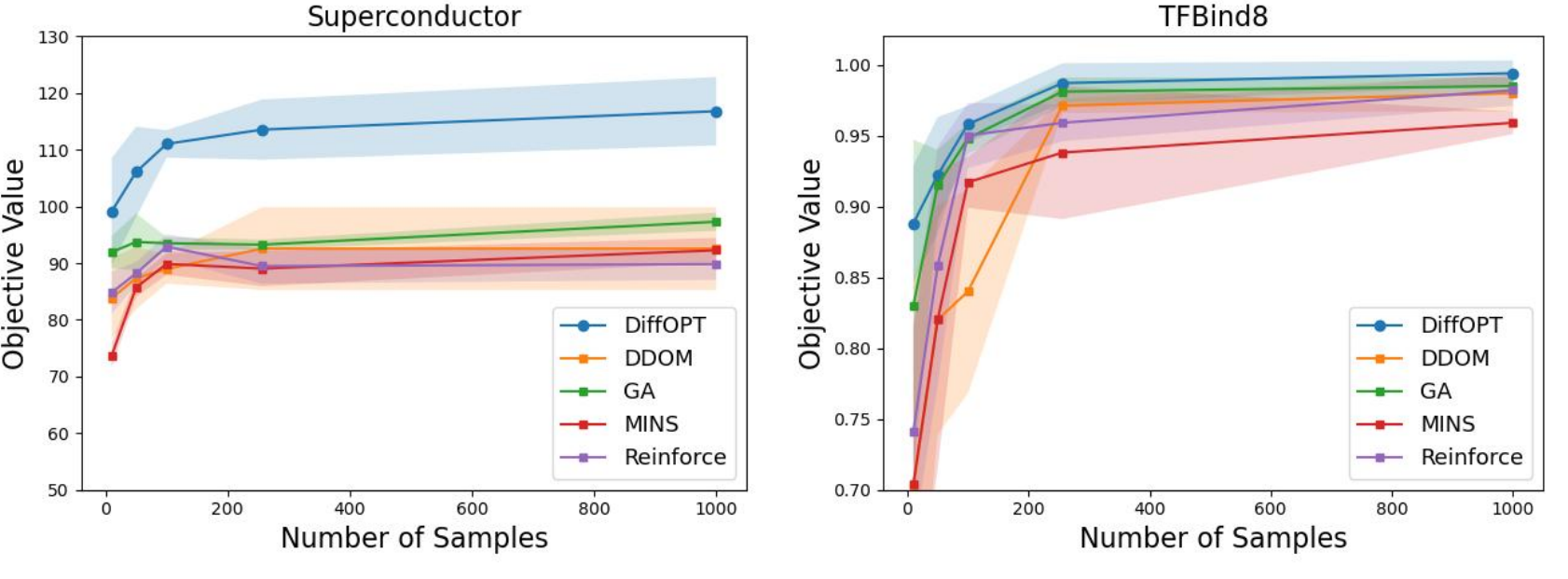}

  %\vspace{-1.5em}
  \caption{Impact of the number of samples at testing time.}
  \label{fig:training_sample}
\end{figure}

\paragraph{Sample efficiency}

We explore the sample efficiency of \ours at both training and testing stages. \Cref{fig:training_data_size} shows the performance of various methods versus the ratio of training data on Superconductor, TFBind8 and multi-objective molecule optimization. As we can see, on all the three datasets, DiffOPT can outperform all.

Our method is also sample efficient during inference. \Cref{fig:training_sample} shows the performance versus number of samples at inference stage. Notably, on both Superconductor and TFBind8, DiffOPT consistently outperforms all the baseline methods for various sample sizes during inference.

It is also important to highlight that our method consistently achieves much greater sample efficiency than DDOM at both training and inference stages, despite both approaches leveraging diffusion models.

% We finally investigate the impact of training data size on the superconductor dataset. We compare \ours with the best baseline on this dataset---DDOM. As we can see from Figure~\ref{fig:training_data_size}, \ours outperforms DDOM constantly across all the ratios of training data. 

\end{document}